\documentclass[11pt]{article}
\usepackage[margin=1in]{geometry}
\usepackage{amsmath,amssymb,mathtools,bm,booktabs}
\usepackage{amsthm}
\usepackage{verbatim}
\usepackage{graphicx}
\usepackage{multirow,enumitem}
\usepackage[numbers,sort&compress]{natbib}
\usepackage[dvipsnames]{xcolor}
\usepackage[utf8]{inputenc}
\usepackage[T1]{fontenc}
\usepackage{microtype}
\usepackage{hyperref}
\usepackage{url}

\usepackage{amsmath,amsfonts,bm}

\def\eqref#1{equation~\ref{#1}}
\def\1{\bm{1}}

\DeclareMathAlphabet{\mathsfit}{\encodingdefault}{\sfdefault}{m}{sl}
\SetMathAlphabet{\mathsfit}{bold}{\encodingdefault}{\sfdefault}{bx}{n}

\newcommand{\Cov}{\mathrm{Cov}}
\newcommand{\independent}{\mathrel{\text{\scalebox{1.07}{$\perp\mkern-10mu\perp$}}}}
\DeclareMathOperator{\DO}{do}

\hypersetup{
    colorlinks=true,
    linkcolor=Maroon,   
    urlcolor=NavyBlue,
    citecolor=Green,
}

\title{TabCF: Distributional Control Function Estimation with Tabular Foundation Models}

\author{%
  Geping Chen \\
  Iowa State University \\
  \texttt{gepingc@iastate.edu}
  \and
  Chunlin Li \\
  University of Virginia \\
  \texttt{chunlin@virginia.edu}
  \and
  Tianzhong Yang \\
  University of Minnesota \\
  \texttt{yang3704@umn.edu}
  \and
  Zhengyuan Zhu \\
  Iowa State University \\
  \texttt{zhuz@iastate.edu}
  \and
  Jing Zhou \\
  University of Manchester \\
  \texttt{jing.zhou@manchester.ac.uk}
}
\date{}

\newtheorem{proposition}{Proposition}[section]

\begin{document}

\maketitle
\begin{abstract}

Instrumental variable (IV) and control function (CF) methods are powerful tools for causal effect estimation in the presence of unmeasured confounding, yet most existing approaches target only mean effects and/or demand substantial fitting and tuning effort. In this paper, we introduce a simple method, TabCF, for control function regression using tabular foundation models, which enables accurate, fast, identification-transparent, and tuning-light causal estimation of distributional quantities, such as interventional means and quantiles; we also propose a copula-based approximation for multivariate outcomes. TabCF performs favorably against representative methods across a broad range of small- to medium-sized synthetic and real data scenarios. The central message is two-fold: for practitioners, it highlights that TabCF is an effective tool for distributional causal inference; for researchers, it suggests that the proposed approach could be considered a strong baseline for future method development. Code is available at \url{https://github.com/GepingChen/TabCF}.

\end{abstract}

\section{Introduction}
\label{sec:introduction}

Instrumental variables (IV) and control functions (CF) are powerful tools tackling unmeasured confounding in causal inference, but a majority of the IV literature has focused on mean effects ~\citep{imbens1994local,angrist1996identification,neweypowell2003npiv,guo2016control,hartford2017deepiv,chen2024discovery}. In many applications, however, the object of interest is distributional: practitioners want to know how an intervention changes risk, tail behavior, inequality, or other heterogeneous features of the outcome distribution, not only its average. 

Recent work has made remarkable progress on distributional IV estimation, including methods that target the full interventional distribution~\citep{kookpfister2025dive,holovchak2025div}. Meanwhile, many flexible IV estimators, including those mentioned above, remain expensive to tune and often require dataset-specific architecture choices, hyperparameter search, or repeated refitting for different estimands~\citep{hartford2017deepiv,bennett2019deepgmm,kookpfister2025dive,holovchak2025div}. More recently, advances in tabular foundation models (TFMs) have attracted considerable attention. These models amortize learning across many tasks and produce predictive distributions in a single forward pass~\citep{mueller2022pfn}, showing strong performance on small- to medium-sized samples and calibrated predictive uncertainty without per-dataset training~\citep{hollmann2025tabpfn, grinsztajn2025tabpfn25}. However, existing causal methods based on TFMs either focus only on mean effects \citep{balazadeh2025causalpfn,ma2025foundation} or rely on agnostic identification approaches \citep{robertson2025dopfn}.

In response to this practical gap, we introduce \textbf{TabCF}, showing that the explicit control function identification formula of \cite{imbensnewey2009triangular} can be operationalized with modern TFMs to produce an empirically accurate, computationally fast, and tuning-light estimator for interventional distribution functionals under continuous treatment IV settings.
Specifically, we summarize the strengths of TabCF in the following aspects:
\begin{enumerate}
    
    \item \textbf{TabCF delivers strong performance out of the box.} By design, TabCF inherits the strengths of modern TFMs \citep{hollmann2025tabpfn,qu2026tabiclv2}, and delivers accurate and fast distributional estimation while requiring minimal tuning. We also provide a user-friendly software for TabCF, publicly available at \url{https://github.com/GepingChen/TabCF}.
    
    \item \textbf{TabCF supports additional practical flexibility.} 
    Existing TFMs for causal inference largely focus on univariate outcomes \citep{balazadeh2025causalpfn,robertson2025dopfn,ma2025foundation}. To enable causal analysis of multiple outcomes, we have extended TabCF based on a practical copula to approximate joint interventional distributions, thereby greatly expanding its applicability. Moreover, the TabCF interface also allows the adjustment of pretreatment covariates, a feature that is practically necessary yet overlooked in many existing implementations \citep{bennett2019deepgmm,saengkyongam2022hsicx,kookpfister2025dive}.

    \item \textbf{TabCF is transparent in causal analysis.} TabCF adopts an identification approach that facilitates direct estimation using TFMs. This design separates the role of causal assumptions from the choice of the predictive backbone model. As a result, it preserves the interpretability of causal analysis, allows diagnostics of each step, and unleashes the power of using a foundation model as a plug-in distributional regressor.

    % TabCF directly targets the interventional distribution, which yields coherent estimation of means and quantiles and provides a natural entry point for covariate adjustment, multivariate intervention distributions, and quantile-calibration analysis.
\end{enumerate}

We have conducted comprehensive experiments: TabCF performs favorably against representative and state-of-the-art methods across a broad range of small- to medium-sized data scenarios. The central message is two-fold: for practitioners, TabCF is an effective tool for distributional causal inference; for researchers, we believe TabCF will serve as a strong baseline for future method development.

\paragraph{Related work.}
Our work connects TFMs and causal inference. 

TFMs have received growing attention in recent years, due to their strong performance in broad scenarios. Notably, the TabArena leaderboard~\citep{erickson2025tabarena} displays several TFMs in its top ranks: the TabPFN family, including TabPFNv1~\citep{hollmann2022tabpfn}, TabPFNv2~\citep{hollmann2025tabpfn}, and TabPFNv2.5~\citep{grinsztajn2025tabpfn25}; the TabICL family, including TabICLv1~\citep{qu2025tabicl} and TabICLv2~\citep{qu2026tabiclv2}; TabDPT~\citep{ma2025tabdpt}; and Mitra~\citep{zhang2025mitra}. Since TabCF targets interventional distributions, distributional prediction is especially relevant: TabPFNv2 and TabPFNv2.5 model regression through binned predictive distributions, while TabICLv2 directly predicts many quantiles and reconstructs distributions, densities, and moments. By contrast, TabDPT and Mitra are less direct distributional backbones in their current forms. 

Besides general TFMs, amortized causal inference also gathers significant interest. 
BBCI~\citep{bynum2025BBCI} learns a task-specific estimator by meta-training a predictor on dataset-effect pairs, enabling amortized estimation of causal effects; however, their implementation is not publicly available.
CausalPFN~\citep{balazadeh2025causalpfn} targets average causal effects while assuming unconfoundedness, so it is not designed for IV settings with hidden confounding. Do-PFN~\citep{robertson2025dopfn} predicts conditional interventional distributions without requiring the ground truth causal graph, but its identification logic is absorbed into the learned posterior, making it less transparent. CausalFM~\citep{ma2025foundation} trains PFN-based estimators for various identification regimes, but focuses only on average/conditional average treatment effects rather than interventional distributions. 
Moreover, their implementations \citep{balazadeh2025causalpfn,robertson2025dopfn,ma2025foundation} are limited to binary treatments.

\section{TabCF framework}
\label{sec:method}

We consider a structural causal model with unobserved confounding,
\begin{equation}
Y = g(X, W,\varepsilon), \qquad X = h(Z, W,\eta), 
\label{eq:triangular_scm}
\end{equation}
where $Y$ is an outcome, $X$ is a continuous treatment, $Z$ is an instrument, and $W$ is a vector of observed pretreatment covariates. The disturbances
$\varepsilon$ and $\eta$ capture unobserved factors affecting the outcome and
treatment, respectively. 
The absence of $Z$ from the outcome equation encodes
the exclusion restriction \citep{angrist1996identification}: $Z$ affects $Y$ only through $X$. We allow $(\eta,\varepsilon)$ to be arbitrarily dependent conditional on $W$. Thus, treatment assignment may remain confounded after
adjusting for $W$, because unobserved factors influencing $X$ may also
influence $Y$ \citep{imbensnewey2009triangular}.
Our target is the \emph{interventional} outcome cumulative
distribution function (CDF):
\begin{equation}
F_{Y(x)}(y) := \mathbb{P}\!\left(Y \le y \mid \DO(X=x)\right), \qquad y\in\mathbb{R},
\label{eq:Fx_def}
\end{equation}
which fully characterizes distributional causal effects, where $\DO(\cdot)$ is the do-operator \citep{pearl2009causality}. Common causal functionals can be derived from this CDF, such as:

(a) {\it Interventional mean:} $\mu (x) = \mathbb{E}\!\left[Y \mid \DO(X=x)\right] = \int_0^\infty (1 - F_{Y(x)}(y)) dy - \int_{-\infty}^0 F_{Y(x)}(y)dy$. 
 
(b) {\it Interventional $\tau$-quantile:} $q_\tau(x) = \inf\{y\in\mathbb{R}: F_{Y(x)}(y) \ge \tau\}$ for any $\tau\in(0,1)$.

(c) {\it Interventional Gini index:} $\mu (x)^{-1} \int_{0}^\infty F_{Y(x)}(y)(1 - F_{Y(x)}(y))dy$, $Y(x)\geq 0$ almost surely.

\subsection{Preliminaries}
\label{sec:cf}

\paragraph{Tabular foundation model (TFM) and in-context distributional learning.}

When $\varepsilon$ and $\eta$ are independent (i.e., no hidden confounding), the interventional distribution $F_{Y(x)}(y)$ reduces to the conditional distribution $F_{Y\mid X}(y\mid x)$. In such situations, modern TFMs \citep{hollmann2025tabpfn,qu2026tabiclv2} can provide fast, accurate estimation of conditional distributions via in-context learning. 
Given a dataset $\mathcal D$ and a test point $x$, rather than performing per-dataset training to learn the conditional distribution, TFMs learn a direct map
$(x,\mathcal{D})\mapsto F_{Y\mid X}(y\mid x)$ by 
pretraining a transformer on a large collection of synthetic datasets generated from a flexible structural prior.
At inference time, the pretrained model parameters are fixed: the model receives the entire observed dataset as ``context'' and returns $\widehat{F}_{Y\mid X}(\cdot \mid x)$ for any test point $x$ in a single forward pass, without gradient-based fitting on $\mathcal{D}$. The procedure does not require explicit tuning and has demonstrated strong performance on small-to-medium samples.

However, the reduction from $F_{Y(x)}$ to $F_{Y\mid X}$ relies on the absence of hidden confounding. When $\varepsilon$ and $\eta$ are dependent, $F_{Y\mid X}$ generally differs from the interventional distribution $F_{Y(x)}$, so directly estimating the conditional distribution is no longer sufficient 
for causal inference. To target $F_{Y(x)}$ in this setting, %To handle the dependence of $\varepsilon$ and $\eta$, 
we integrate TFMs with a control function approach. 

\paragraph{Instrumental variable and control function.} Instrumental variable (IV) is a special type of variable that helps mitigate hidden confounding. In \eqref{eq:triangular_scm}, $Z$ is called an IV if it satisfies:
\begin{enumerate}[label=\textbf{(IV\arabic*)}, ref=IV\arabic*, leftmargin=*]
\item \label{iv: relevance}
\textit{Relevance:} $X$ must depend on $Z$, or equivalently $Z\not\independent X \mid W$.
\item \label{iv: exclusion}
\textit{Exclusion:} $Z$ affects $Y$ only through $X$, or equivalently $Y\independent Z \mid X, W, \varepsilon$.
\item \label{iv: exogeneous}
\textit{Unconfoundedness:} $Z\independent (\varepsilon, \eta) \mid W$.
\end{enumerate}
With a valid IV, causal effects can be identified under suitable assumptions \citep{angrist1996identification}. For example, in the linear causal model, the IV approach is called the two-stage least squares (2SLS) regression, where the mean coefficient is identified as $\Cov(Y, Z \mid W) / \Cov (X, Z\mid W)$.

The control function (CF) approach is a subfamily of IV-based methods \citep{terza20082SRI,guo2016control,puli2020gcf}. The idea of CF is to construct a function of $(X,Z,W)$ that is independent of the instrument $Z$ conditional on $W$ and controls for the latent variation in $X$ that is related to the outcome disturbance. 
As a result, one can use CF as a proxy for the unmeasured confounding.
In the linear causal model, CF is also called the two-stage residual inclusion (2SRI) regression \citep{terza20082SRI}, and is mathematically equivalent to 2SLS.

\subsection{Choice of CF identification strategy}\label{sec:identification}

Not all CF methods, together with their identification conditions, are equally convenient to integrate with TFMs. In the literature, identification schemes based on CF can be informally categorized into two groups: the ``explicit approach'' and the ``implicit approach.''
In the explicit approach, the target causal quantity is represented directly as a functional of observed data objects \citep{florens2008control,imbensnewey2009triangular}. In this case, estimation is naturally plug-in: one estimates the required nuisance components from the data and substitutes them into the explicit identification formula. In the implicit approach, by contrast, the target causal quantity is not expressed in closed form as a direct functional of observed quantities. Instead, it is characterized as the solution to an equation, an optimization problem, a fixed point, or an inverse problem \citep{lee2007endogeneity,puli2020gcf}. While both approaches are valid, we find the explicit approach preferable when integrating TFMs, as the implicit approach may amplify estimation errors introduced by the foundation model; see Appendix~\ref{app:explicit-implicit-cf} for further discussion.

In this work, we take the explicit approach of \cite{imbensnewey2009triangular} and impose the following identifiability conditions for the control function 
\begin{equation}
    V=F_{X\mid Z,W}(X\mid Z,W).
    \label{eq:control}
\end{equation}

\begin{enumerate}[label=\textbf{(CF\arabic*)}, ref=CF\arabic*, leftmargin=*]
\item\label{CF: monotone} \textit{Monotonicity:}
    The conditional CDF $F_{\eta\mid W}(\cdot\mid 
    w)$ is strictly monotonic on its support.
    For $F_W$-almost every $w$, the map $\eta \mapsto h(z,w,\eta)$ is continuous and strictly
    monotonic, where the orientation does not depend on $z$. 

\item\label{CF: common support} \textit{Common support:}
    For $F_W$-almost every $w$ and for all $x \in \operatorname{supp}(X)$, we have $\operatorname{supp}(V\mid X=x,W=w)=\operatorname{supp}(V\mid W=w)=[0,1]$, where $\operatorname{supp}(\cdot)$ is the support set of a distribution.
\end{enumerate}
% In the above, the probability integral transform yields the control variable
% \begin{equation}
% V := F_{X\mid Z,W}(X\mid Z,W),
% \label{eq:V_def}
% \end{equation}
% which satisfies two key properties:
% (i) $V\independent Z\mid W$ and $V\mid W\sim \mathrm{Unif}(0,1)$;
% (ii) $X = F^{-1}_{X\mid Z,W}(V\mid Z,W)$ almost surely, where $F^{-1}_{X\mid Z,W}(\cdot\mid z,w)$
% denotes the generalized inverse of the conditional CDF.

Under (\ref{CF: monotone}), $V$ is a one-to-one transform of $\eta$. Therefore, \eqref{eq:triangular_scm} and (\ref{iv: exogeneous}) imply $\varepsilon\independent X \mid V, W$. Meanwhile, (\ref{CF: common support}) ensures the conditional law $Y\mid (X=x,V=v,W=w)$ is defined for all relevant $(x,v,w)$. As a result, conditioning on $(X,V,W)$ isolates the latent heterogeneity for outcomes, and the conditional CDF $F_{Y\mid X,V,W}(\cdot\mid x,v,w)$ can help recover interventional distributions by integrating out $V$. 
This is summarized in Proposition~\ref{proposition:identification}.

\begin{proposition}\label{proposition:identification}
    Suppose (\ref{iv: relevance})--(\ref{iv: exogeneous}) and (\ref{CF: monotone})--(\ref{CF: common support}) hold. Then for $(x, y) \in \operatorname{supp}(X)\times \mathbb R$, the interventional conditional and marginal CDFs are respectively identified as
    \begin{equation}
        \begin{aligned}
        F_{Y(x)\mid W}(y\mid w)= \int_0^1 F_{Y\mid X,V,W}(y\mid x,v,w)\, dv, \qquad
        F_{Y(x)}(y)
        =
        \mathbb{E}_{W}\left[ F_{Y(x)\mid W}(y \mid W)\right].
        \end{aligned}
        \label{eq:cf_identification}
    \end{equation}
\end{proposition}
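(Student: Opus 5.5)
The plan is to follow the Imbens--Newey argument and work conditionally on $W=w$ throughout. I will take the interventional conditional CDF to be $F_{Y(x)\mid W}(y\mid w)=\mathbb{P}(g(x,w,\varepsilon)\le y\mid W=w)$, which is the meaning of the do-operator in the triangular model \eqref{eq:triangular_scm}. The proof then has three steps.

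\emph{Step 1: properties of $V$.} For $F_W$-a.e.\ $w$, (\ref{iv: exogeneous}) gives that $\eta\mid (Z=z,W=w)$ has the law $F_{\eta\mid W}(\cdot\mid w)$. Suppose $h$ is increasing in $\eta$ (the decreasing case is analogous, with $V$ replaced by $1-V$ after the usual adjustment). Then (\ref{CF: monotone}) gives
\[
F_{X\mid Z,W}(x\mid z,w)=F_{\eta\mid W}\bigl(h^{-1}(z,w,x)\mid w\bigr),
\]
and therefore $V=F_{\eta\mid W}(\eta\mid W)$ almost surely. Since $F_{\eta\mid W}(\cdot\mid w)$ is continuous and strictly increasing, two consequences follow. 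First, $V$ is a bijective function of $\eta$ given $W$. Second, $V\mid W=w\sim\mathrm{Unif}(0,1)$.

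\emph{Step 2: conditional independence $\varepsilon\independent X\mid V,W$.} Given $(V,W)$, $\eta$ is fixed, so $X=h(Z,W,\eta)$ is a function of $(Z,V,W)$. By (\ref{iv: exogeneous}), $Z\independent(\varepsilon,\eta)\mid W$. Because $V$ is a measurable function of $(\eta,W)$, this yields $Z\independent \varepsilon\mid V,W$, by the standard decomposition property of conditional independence. Hence $X\independent\varepsilon\mid V,W$.

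\emph{Step 3: identification.} For every $(x,v,w)$ in the support, the exclusion restriction that $Y=g(X,W,\varepsilon)$ does not involve $Z$, combined with Step 2, gives
\[
F_{Y\mid X,V,W}(y\mid x,v,w)=\mathbb{P}(g(x,w,\varepsilon)\le y\mid V=v,W=w).
\]
(\ref{CF: common support}) ensures the left-hand side is well defined for every $v\in[0,1]$, not just on the conditional support of $V$ given $X=x$. Integrating over $v$ against the $\mathrm{Unif}(0,1)$ law of $V\mid W=w$ from Step 1, the tower property gives
\[
\int_0^1F_{Y\mid X,V,W}(y\mid x,v,w)\,dv=\mathbb{P}(g(x,w,\varepsilon)\le y\mid W=w)=F_{Y(x)\mid W}(y\mid w).
\]
The marginal statement then follows by averaging over $F_W$. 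This step uses that $W$ is pretreatment, so intervening on $X$ leaves the law of $W$ unchanged.

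I expect the main obstacle to be measure-theoretic rigor in Step 3. Conditional CDFs are defined only up to null sets, while the formula evaluates $F_{Y\mid X,V,W}$ at a fixed $x$ for all $v$. I would handle this by using the version of the conditional law defined by $\mathbb{P}(g(x,w,\varepsilon)\le y\mid V=v,W=w)$, which makes the identity hold pointwise. I would then invoke (\ref{CF: common support}) to argue that this version agrees with the observable conditional law on a set of full $dv$-measure for almost every $w$. (\ref{iv: relevance}) is not needed for the identity itself; it only ensures that common support is a nontrivial, meaningful restriction.
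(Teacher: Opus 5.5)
Your proposal is correct and follows essentially the same route as the paper's proof. Both show $V=F_{\eta\mid W}(\eta\mid W)$ is uniform and in one-to-one correspondence with $\eta$ given $W$, derive $\varepsilon\independent X\mid V,W$ from (\ref{iv: exogeneous}) (the step from $Z\independent(\varepsilon,\eta)\mid W$ to $Z\independent\varepsilon\mid V,W$ is weak union, not decomposition), and then integrate $\mathbb{P}\{g(x,w,\varepsilon)\le y\mid V=v,W=w\}$ against $\mathrm{Unif}(0,1)$ before averaging over $W$.
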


% \paragraph{Remarks.}
% (i) Part (a) is a standard probability integral transform (PIT) statement implied by the
% monotonic first-stage in triangular models: $V$ is (a monotone transform of) the latent
% first-stage disturbance rank and is therefore uniform and independent of the instrument.

% (ii) \eqref{eq:cf_identification} expresses the interventional CDF as an average of the observable
% conditional CDF $F_{Y\mid X,V}(\cdot\mid x,v)$ over the marginal distribution of $V$.
% Assumption (CF4) is a common-support/overlap condition (cf. the ``common support'' assumption in
% \citealp{imbensnewey2009triangular} and the motivating continuous-treatment identification discussion in
% \citealp{holovchak2025div}): it ensures that for each target level $x$, the conditional law
% $Y\mid (X=x,V=v)$ is well-defined for all $v$ in the marginal support of $V$, making
% \eqref{eq:cf_identification} operational. When overlap fails, the interventional distribution under $do(X=x)$
% is generally only partially identified without additional restrictions.

% (iii) Consequently, estimation reduces to two conditional distribution learning problems:
% (i) estimating $F_{X\mid Z}$ to construct the control variable $V$, and (ii) estimating
% $F_{Y\mid X,V}$ and integrating out $V$ as in \eqref{eq:cf_identification}.

\subsection{TabCF Procedure}
\label{sec:tabpfn}

We now turn to estimation and let $\mathcal{D}_n = \{(W_i,Z_i, X_i, Y_i)\}_{i=1}^n$
denote the observed sample of $n$ i.i.d.\ copies of $(W,Z,X,Y)$.
TabCF produces the distributional estimate in the following three stages:

\paragraph{Stage U1: constructing the control variable.}
We treat $(Z,W)\mapsto X$ as a distributional regression task with context dataset $\mathcal{D}^{(1)}_n = \{((Z_i,W_i),X_i)\}_{i=1}^n$. 
TabCF yields an estimated  distribution $\widehat{F}_{X\mid Z,W}(\cdot\mid z,w)$ for any test point $(z,w)$. We then form the plug-in
control variable by the probability integral transform,
$\widehat{V}_i = \widehat{F}_{X\mid Z,W}(X_i\mid Z_i,W_i)$, $i=1,\ldots,n$.

\paragraph{Stage U2: estimating the conditional outcome distribution.}
Next, we treat $(X,V,W)\mapsto Y$ as a second distributional regression task with context dataset $\mathcal{D}^{(2)}_n = \{((X_i,\widehat{V}_i,W_i),Y_i)\}_{i=1}^n$. TabCF returns a distributional estimate $\widehat{F}_{Y\mid X,V,W}(y\mid x,v,w)$ for any test point $(x,v,w)$.

\paragraph{Stage U3: estimating the interventional distributions.}
Combining the two stages, we estimate the interventional CDFs in \eqref{eq:cf_identification} by their empirical approximation defined as
\begin{equation}
\widehat{F}_{Y(x)\mid W}(y\mid w) = \frac{1}{n}\sum_{i=1}^n \widehat{F}_{Y\mid X,V,W}(y\mid x,\widehat{V}_i,w), \quad
\widehat{F}_{Y(x)}(y)
=
\frac{1}{n}\sum_{i=1}^{n}
\widehat{F}_{Y\mid X,V,W}(y\mid x,\widehat{V}_i,W_i).
\label{eq:Fhatx}
\end{equation}

We make two remarks: First, TabCF only uses TFMs for distributional regression. Due to this plug-in approach, it is convenient to obtain all intermediate nuisance estimates, which enable model diagnostics and thus improve the transparency of causal data analysis; see Appendix \ref{app:diagnostics} for further discussion. 
By contrast, it is less straightforward for recent amortized, black-box causal inference methods \citep{bynum2025BBCI,balazadeh2025causalpfn,ma2025foundation,robertson2025dopfn}.

Second, TabCF uses the full sample to estimate both stages. We also considered sample splitting and cross-fitting; however, in our synthetic experiments, cross-fitting did not improve final interventional mean or quantile accuracy and took longer runtime. We therefore use the full sample for TabCF; see Appendix \ref{app:cross-fitting}.

\section{Extending to multivariate outcomes}
\label{sec:multivariate-response}

We now extend TabCF from a univariate outcome to a $K$-dimensional response $Y=(Y_1,\ldots,Y_K)$, with $K\ge 2$.
The goal is to estimate the joint interventional distribution of $Y$ given the intervention $\DO(X=x)$:
\begin{equation}
    F_{Y(x)}(y) = \mathbb{P} \Big(Y_1\le y_1,\ldots,Y_K\le y_K \mid \DO(X=x) \Big), \qquad y=(y_1,\ldots,y_K)\in\mathbb{R}^K.
\end{equation}
By Sklar's theorem \citep{sklar1959fonctions}, there exists a copula function $C_x:[0,1]^K\to[0,1]$ such that
\begin{equation}
F_{Y(x)}(y) = C_x\Big( F_{Y_1(x)}(y_1),\ldots,F_{Y_K(x)}(y_K) \Big).
\label{eq:multivar_sklar}
\end{equation}
The marginal interventional distributions are identified under the CF assumptions. The joint interventional law additionally requires assumptions on the copula.
TabCF uses an $x$-invariant working copula $C = C_x$ to combine the marginals as a practical approximation instead of a nonparametric identification result for $F_{Y(x)}(y)$. 

The estimation proceeds in two stages:

\paragraph{Stage M1: estimating the marginal interventional distributions.} For $k=1,\ldots,K$, we estimate the marginal interventional distribution $\widehat{F}_{Y_k(x)}$ following Stages~U1--U3 in Section~\ref{sec:tabpfn}. 

\paragraph{Stage M2: fitting a copula.} We fit a copula model to summarize the dependence among the response components after the marginal interventional distributions have been estimated. The fitted copula is then used to merge the marginal estimates $\widehat{F}_{Y_k(x)}$'s into a joint interventional distribution.

\section{Synthetic experiments}
\label{sec:simulations}

We examine the operating characteristics of TabCF and compare it with the representative methods. 

\subsection{Univariate outcome: Comparative studies on interventional means}
\label{sec:sim-results-mean}

We consider the following treatment models and outcome models, where we draw $Z \sim \mathcal{N}(1.5, 0.75^2)$, and $H,\varepsilon_X,\varepsilon_Y\sim\mathcal{N}(0,1)$. These settings are adapted from \citep{holovchak2025div}.

\paragraph{Treatment models.}
We define the treatment $X$ using the following settings: 
\begin{enumerate}[label=({T\arabic*}), itemsep=3pt, topsep=2pt, parsep=0pt, partopsep=0pt]
\item \textit{Additive linear mean:} $X = Z + H + \varepsilon_X$; 
\item \textit{Quadratic mean with linear scale:} $X = \left(2Z + \tfrac{1}{4}Z^2\right) + (1 + 0.15Z)\,(H+\varepsilon_X)$.
\end{enumerate}

\paragraph{Outcome models.}
We define the outcome $Y$ using the following settings:
\begin{enumerate}[label=({O\arabic*}), itemsep=3pt, topsep=2pt, parsep=0pt, partopsep=0pt]
\item \textit{Piecewise outcome:}
$Y = \mathbf{1}_{\{X\le 1\}}\,\frac{1}{5}\bigl(5.5 + 2X + 3H + \varepsilon_Y\bigr)
   + \mathbf{1}_{\{X>1\}}\log\bigl((2X+H)^2 + \varepsilon_Y^2\bigr)$;
\item \textit{Periodic outcome with linear trend:} $Y = 3\sin(2X) + 2X - 3H + \varepsilon_Y$;  
\item \textit{Periodic nonadditive interaction:}
$Y = 1 + 2X + \cos(2X) + XH - H + \varepsilon_Y$.
\end{enumerate}
Combining the two treatment models with the three outcome models yields six settings. We do not include pretreatment covariates $W$ in the settings for comparative studies, because some competitors' implementations do not allow them. We have included additional experiments with covariates in Appendix \ref{app:pretreatment-covariates}. Note that the above treatment models satisfy the common support (\ref{CF: common support}) condition. We further examine the scenarios where (\ref{CF: common support}) is moderately violated in Appendix~\ref{app:uniform-z-common-support}.

% We consider $(Z,X,Y)\in\mathbb{R}^3$ and simulate i.i.d.\ samples from the triangular system
% \[
% X = h(Z,H,\varepsilon_X),
% \qquad
% Y = g(X,H,\varepsilon_Y),
% \]
% where the latent factor $H$ is shared across stages and induces endogeneity. 
% In the notation of the triangular model, the first-stage disturbance is $\eta=(H,\varepsilon_X)$ and the outcome disturbance is $\varepsilon=(H,\varepsilon_Y)$; hence $\eta$ and $\varepsilon$ are dependent through the shared latent factor $H$, while the instrument $Z$ is generated independently of these disturbances.
% Unless otherwise noted,  
% In the experiments, we use two treatment designs and four outcome designs:

% =========================
% 3.2 Interventional Means
% =========================

Under the above settings, we evaluate the estimation of the interventional mean function $\mu(x) = \mathbb{E}\!\left[Y \mid \DO(X=x)\right]$. 

\paragraph{Benchmark methods.}
We compare {TabCF} against a set of classical and modern baselines that are commonly used
for IV-based mean estimation. We include:
(i) {control function} (CF) estimators in linear and nonlinear variants \citep{guo2016control};
(ii) flexible ML-based IV methods such as DIV \citep{holovchak2025div}, DeepIV \citep{hartford2017deepiv} and DeepGMM \citep{bennett2019deepgmm};
and (iii) {TabPFN-naive} \cite{hollmann2025tabpfn}, which directly estimates the conditional distribution as if there was no confounding (without IV adjustment).
Appendix~\ref{app:benchmarks} provides short descriptions
and implementation details for each baseline. 

\begin{figure}[ht]
    \centering
\includegraphics[width=0.85\linewidth]{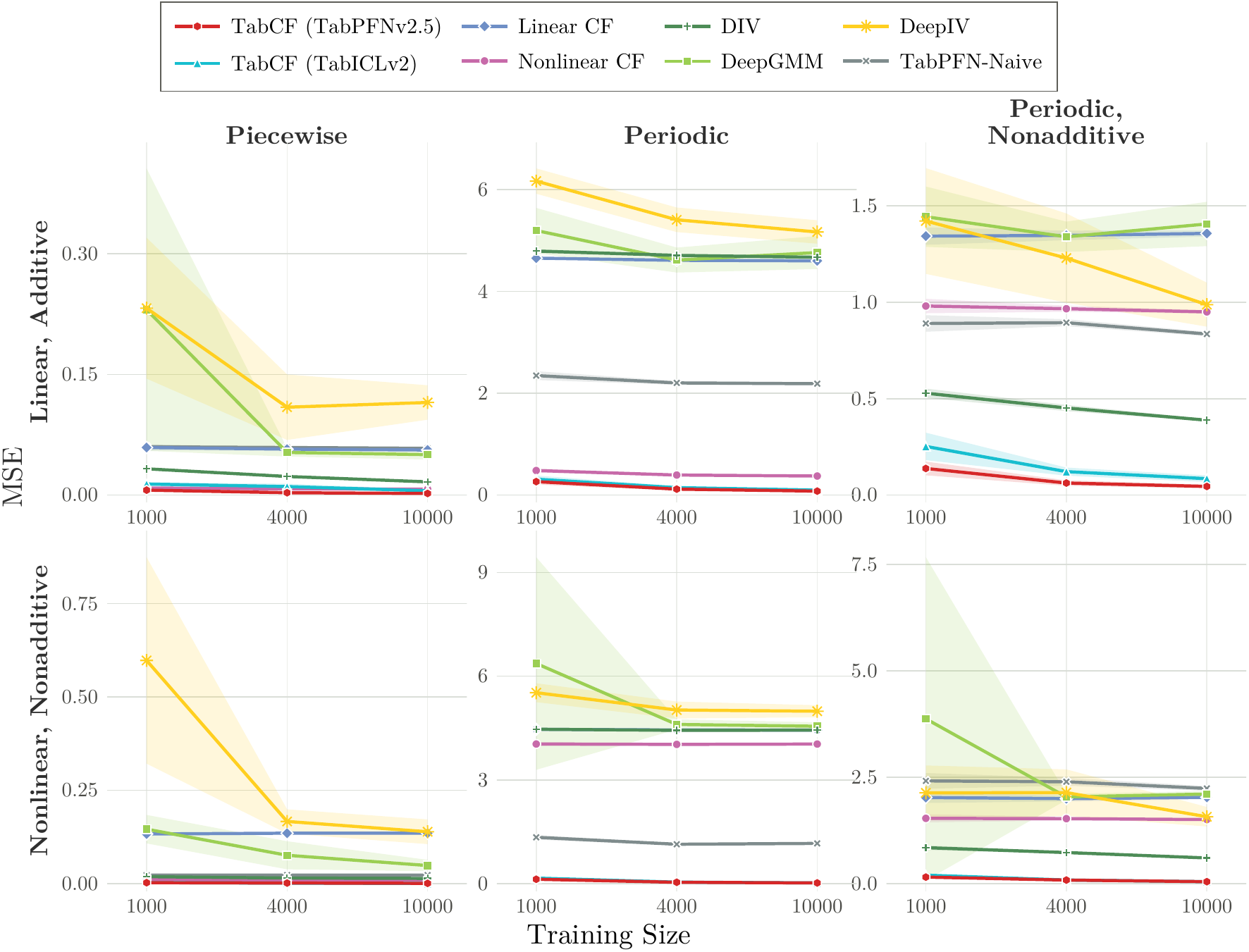}
    \caption{Results of interventional mean estimation. Rows correspond to treatment settings; columns correspond to outcome settings. Results are averaged over 100 random seeds with standard deviations in shaded regions.}
    \label{fig:sim_mean_a3_grid}
\end{figure}

% NOTE: If your bib keys differ, update the citation keys accordingly.
% TODO: ensure the bibliography contains the cited baseline references used in Appendix~\ref{app:benchmarks}.

\paragraph{Metric.}
We report the mean squared error (MSE) of the estimated interventional mean curve on a held-out grid of intervention levels.
Let the evaluation grid \(\{x_g\}_{g=1}^{200}\) be equally spaced from the lower 5\% quantile to the upper 5\% quantile of $X$ distribution.
For each $x_g$, we approximate the ground-truth mean
$\mu(x_g)$ by Monte Carlo under $\DO(X=x_g)$: draw $5000$ i.i.d.\ samples $\{y_{gm}\}_{m=1}^M$ from the true interventional
distribution and compute $\bar y_g = \frac{1}{5000}\sum_{m=1}^{5000} y_{gm}$.
For an estimator $\widehat{\mu}(x)$, we report $\operatorname{MSE} = \frac{1}{200} \sum_{g=1}^
{200}\bigl(\widehat{\mu}(x_g)-\bar y_g\bigr)^2$.
% TODO: state (i) the chosen M, (ii) the number of repetitions/seeds, and (iii) whether parentheses in tables indicate SD/SE across seeds.

% -------------------------
% Results tables/figures
% -------------------------

\paragraph{Results.}
Figure~\ref{fig:sim_mean_a3_grid} shows that the hardest estimation settings are the periodic and
periodic-nonadditive outcome designs, where endogeneity-ignorant or less tailored baselines can incur
substantially larger errors. Across different scenarios, {TabCF} remains in the best-performing cluster and is especially stable as the outcome mechanism becomes more nonlinear. %\textcolor{red}{We also find some ... a larger sample size does not lead to better performance for TabCF...; see Section ?}

% ============================
% 3.3 Interventional Quantiles
% ============================
\subsection{Univariate outcome: Comparative studies on interventional quantiles}
\label{sec:sim-results-quantile}

For the same settings in Section \ref{sec:sim-results-mean}, we evaluate the estimation of the interventional quantile function
$q_\tau(x) = \inf\{y\in\mathbb{R}: F_{Y(x)}(y) \ge \tau\}$ for $\tau\in(0,1)$. 

\paragraph{Benchmark methods.}
We compare {TabCF} against two quantile baselines: DIV \citep{holovchak2025div} and IVQR \citep{chernozhukov2005ivqr}.
For {TabCF} and DIV, interventional quantiles are obtained by inverting the estimated interventional CDF,
whereas IVQR estimates a quantile-specific structural curve directly at each specified level.
Appendix~\ref{app:benchmarks} summarizes the benchmark methods and their scope.

\paragraph{Metric.}
Again, let the evaluation grid \(\{x_g\}_{g=1}^{200}\) be equally spaced from the lower 5\% quantile to the upper 5\% quantile of the distribution of $X$.
Let $q_\tau(x_g)$ denote a Monte Carlo approximation to the ground truth
interventional $\tau$-quantile at $x_g$.
For each $\tau$, we report pointwise errors aggregated over the grid, 
{$\mathrm{MSE}(\tau) = \frac{1}{200}\sum_{g=1}^{200}(\widehat q_\tau(x_g)-q_\tau(x_g))^2$}.

\newcommand{\SimQuantileRMSEBoxplots}{%
  \begin{figure}[!htbp]
    \centering
    \includegraphics[width=0.8\linewidth]{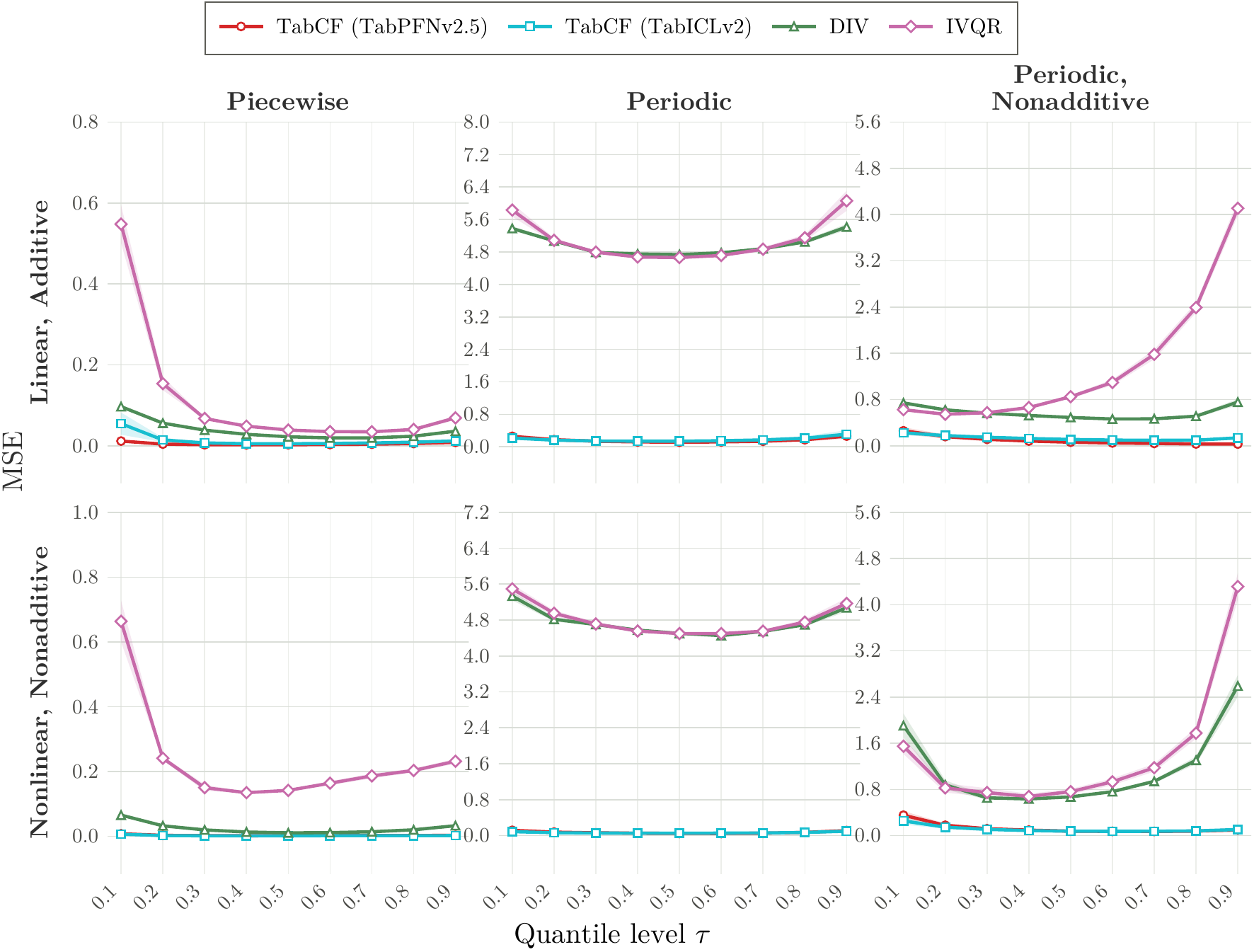}
    \caption{Interventional quantile MSE at $n=4000$. Rows correspond to treatment settings; columns correspond to outcome settings. Results are averaged over 100 random seeds with standard deviations in shaded regions.}
    \label{fig:sim-quantile-rmse-boxplots}
  \end{figure}%
}

\SimQuantileRMSEBoxplots

\paragraph{Results.}
Figure~\ref{fig:sim-quantile-rmse-boxplots} shows a clear separation between the TabCF and the existing competitors.
Across all six settings, both {TabCF} variants substantially outperform DIV and IVQR, often by a wide margin. Within the foundation model family, the TabPFN backbone performs best on linear-additive treatment cases with piecewise and periodic-nonadditive outcomes, while the TabICL backbone performs best on the remaining four designs, including nonlinear treatment cases. The strongest gains appear in the periodic and periodic-nonadditive outcome settings, where the gap between TabCF and DIV/IVQR is largest.
Errors remain smallest over the middle quantiles and rise as $\tau$ approaches $0.1$ or $0.9$, but this edge degradation is still milder for the two {TabCF} variants than for the classical baselines.
Overall, TabCF is most useful when the structural relationship is indeed nonlinear.

\subsection{Univariate outcome: Runtime analysis}
\label{app:runtime}
\begin{figure}[!htbp]
\centering
\includegraphics[width=0.6\linewidth]{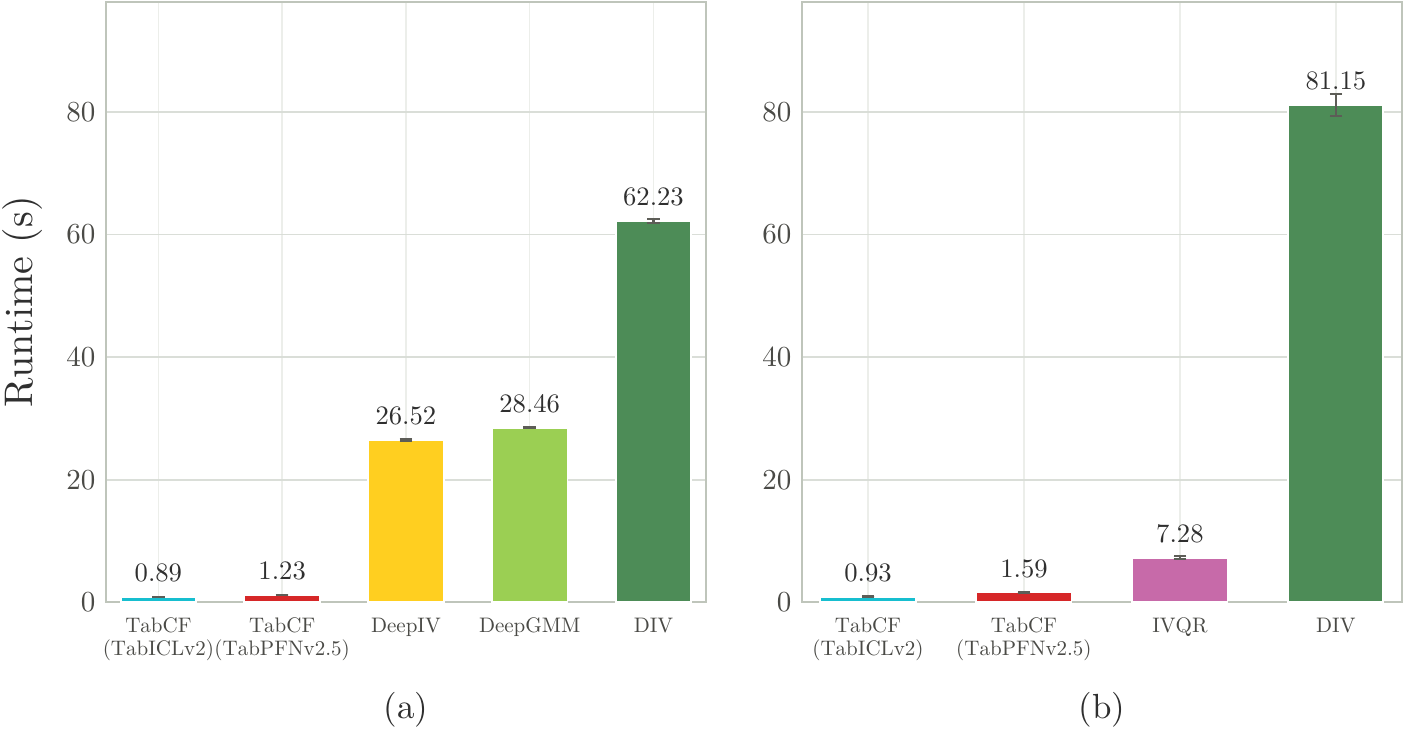}
\caption{Runtime comparison at $n=1000$: (a) reports the results of the interventional mean estimation, and (b) reports the results of the
interventional quantile estimation. Results are averaged over 100 random seeds with error bars.}
\label{fig:runtime-analysis-panels}
\end{figure}

Beyond estimation accuracy, computational efficiency is also important for practical IV estimation, especially when competing methods require repeated model fitting or distributional optimization. Figure~\ref{fig:runtime-analysis-panels} summarizes the runtime comparison, among nonlinear methods, for estimating interventional means and quantiles at $n=1000$. TabCF is significantly faster than the state-of-the-art methods. 

In our experiments, all GPU experiments were run on a cluster with NVIDIA A100-SXM4-80GB GPUs, using PyTorch 2.8.0+cu128; jobs were submitted with 8 CPU cores and 64GB of host memory. The R baselines ({Linear CF}, {Nonlinear IV}, and {DIV}) are launched through the cluster's R module stack as CPU jobs with eight CPU cores and 64GB RAM, while {TabCF}, {TabPFN-naive}, {DeepIV}, and {DeepGMM} use Python implementations.

\subsection{Multivariate outcomes: Comparative studies on joint interventional distribution}
\label{sec:sim-results-multivariate}

We evaluate the extension of TabCF in the bivariate case $Y=(Y_1,Y_2)$. Across all designs, we use the setup:
$Z \sim \mathcal{N}(1.5, 0.75^2)$, $H \sim \mathcal{N}(0,1)$, and $\varepsilon_X \sim \mathcal{N}(0,1)$
and $\varepsilon_1,\varepsilon_2 \sim \mathcal N(0, 1)$ with correlation $\rho_{\varepsilon}$, with all latent variables mutually independent except for the correlation in
$(\varepsilon_1,\varepsilon_2)$.
Note that $\rho_\varepsilon$ controls the correlation of the outcome noise only.
The treatment is generated as $X = Z + H + \varepsilon_X$. 
We consider four bivariate outcome settings:
\begin{enumerate}[label=({BO\arabic*}), itemsep=3pt, topsep=2pt, parsep=0pt, partopsep=0pt]
 \item \textit{Linear baseline:} Let
$(Y_1, Y_2)^\top = (1, 0.5)^\top X 
- (3, 1)^\top H
+
(\varepsilon_1, \varepsilon_2)^\top$.
% Under $do(X=x)$, the joint law is bivariate Gaussian and therefore available
% analytically.
\item \textit{Nonlinear outcomes:} Let $Y_1 = \psi_1(X+H+\varepsilon_1)$ and $Y_2 = \psi_2(X+H+\varepsilon_2)$, where $\psi_1(w) = 2w + 3\sin(2w)$ and 
$\psi_2(w) = \tfrac{1}{2}\psi_1(w)$. 
This yields non-Gaussian interventional outcome distributions through a nonlinear transformation.
\item \textit{Piecewise pre-additive outcomes:}
Let $Y_1 = h_1(X+H+\varepsilon_1)$ and $Y_2 = h_2(X+H+\varepsilon_2)$, where $h_1(w) = \mathbf{1}_{\{w<0\}} w
+ \mathbf{1}_{\{0\le w\le 1\}} 2w
+ \mathbf{1}_{\{w>1\}}(2+\tfrac{1}{2}(w-1))$ and $h_2(w) = \tfrac{1}{2} h_1(w)$.
This introduces kinks at $w=0$ and $w=1$ to test robustness to non-smooth pre-additive transformations.
\item \textit{Softplus pre-additive outcomes:}
Let $Y_1 = s_1(X+H+\varepsilon_1)$ and $Y_2 = s_2(X+H+\varepsilon_2)$, where $s_1(w) = \log\bigl(1+\exp(2w)\bigr)$ and $s_2(w) = \tfrac{1}{2} s_1(w)$.
This design replaces the piecewise map with a smooth monotone alternative,
yielding non-Gaussian marginals without derivative discontinuities.
\end{enumerate}

\paragraph{Benchmark methods.} We compare four methods that accommodate multivariate outcomes:
(i) TabCF; (ii) DIV \citep{holovchak2025div}; (iii) TabPFN-naive \citep{hollmann2025tabpfn}, which ignores the
instrument in the marginal stage; and (iv) an independence baseline, which estimates the same interventional marginal distributions but ignores the dependence structure of the joint distribution.

\paragraph{Metric.}
For evaluation, we compute the sliced Wasserstein distance
between the estimated and true joint interventional distributions on an equally spaced grid $\{x_g\}_{g=1}^{200}$ over $[0,3]$. 

\begin{figure}[!htbp]
    \centering
    \includegraphics[width=\textwidth]{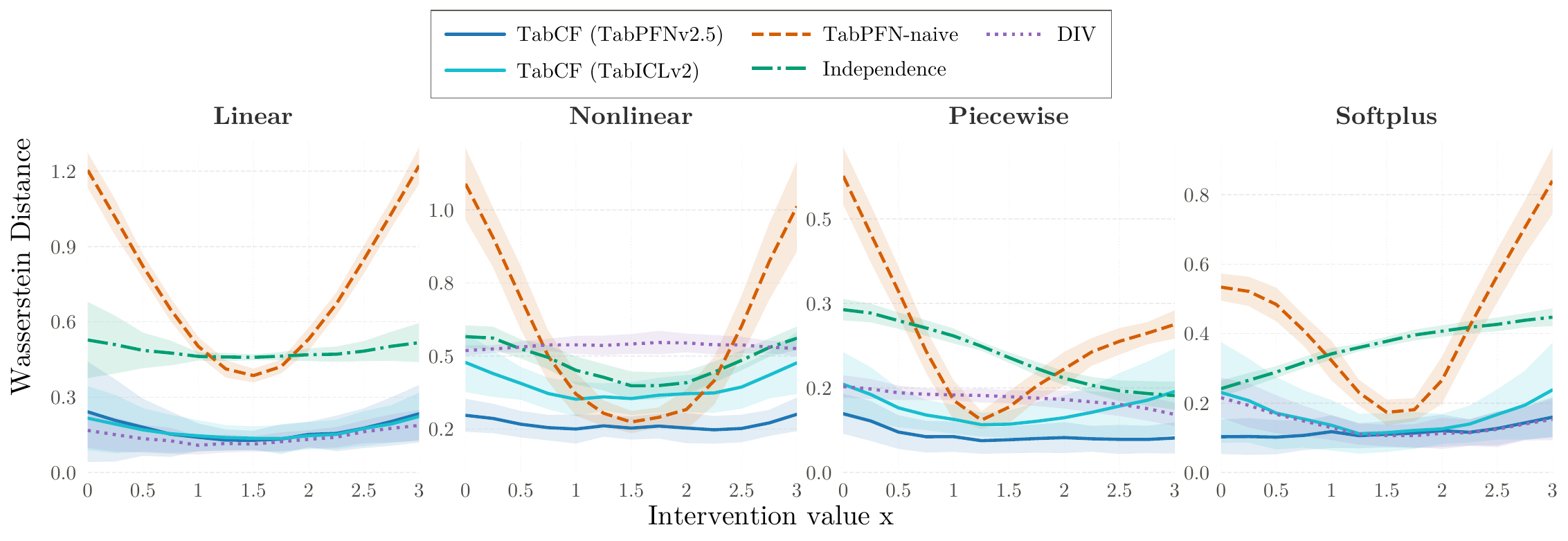}
    \caption{Bivariate-outcome sliced Wasserstein distance to the oracle joint
    interventional distribution for four settings when $n=2000$ and $\rho_{\varepsilon} = 0.6$. Results are averaged over 100 random seeds with standard deviations in shaded regions. Lower
    values indicate better results.}
    \label{fig:multivar_official}
\end{figure}

\paragraph{Results.}
Figure~\ref{fig:multivar_official} shows the results of bivariate-outcome
comparisons. Across various scenarios, TabCF consistently ranks among the strongest methods.

\subsection{Additional experiments}

Appendix \ref{app:experiments} contains additional experiments, including settings with pretreatment covariates in Appendix \ref{app:pretreatment-covariates}, violations of common support (\ref{CF: common support}) in Appendix \ref{app:uniform-z-common-support}, and weak IV relevance (\ref{iv: relevance}) in Appendix \ref{app:instrument-relevance}. Overall, TabCF continues exhibiting strong performance in these settings.

\section{Real data examples}
\label{sec:real-data}

We complement the simulation study with four real data examples:
\textbf{(AJR)} the colonial-origins design for institutions and long-run development \citep{acemoglu2001colonial};
\textbf{(Fulton Fish)} demand at the Fulton Fish Market using weather-based instruments \citep{graddy1995fish};
\textbf{(Card)} the Card college-proximity design for returns to schooling \citep{card1995geographic}; and
\textbf{(CigarettesSW)} state-level cigarette demand \citep{stockwatson2007introduction}. 
Following the evaluations used in recent
distributional-IV work~\citep{holovchak2025div}, we report interventional mean curves for all
four applications and, for \textbf{Fulton Fish}, an additional compact interventional quantile
comparison. Further details on the empirical settings, including the treatment, instrument, outcome, and interpretation of each benchmark, are provided in Appendix~\ref{app:real-data-details}.

% The purpose of these applications is not to provide a definitive empirical re-analysis of each original study. Instead, we use them as standardized scalar-IV benchmarks for assessing whether TabCF produces economically interpretable and appropriately regularized causal curves on small or moderate-sized tabular data. For comparability across applications, we retain only one excluded instrument, one endogenous treatment, and one outcome in each data set. Continuous-treatment applications are evaluated on a common grid over the observed treatment support, while the schooling application is evaluated only at observed schooling levels because education is ordered and discrete.

\begin{figure}[!htbp]
\centering
\includegraphics[width=1\textwidth]{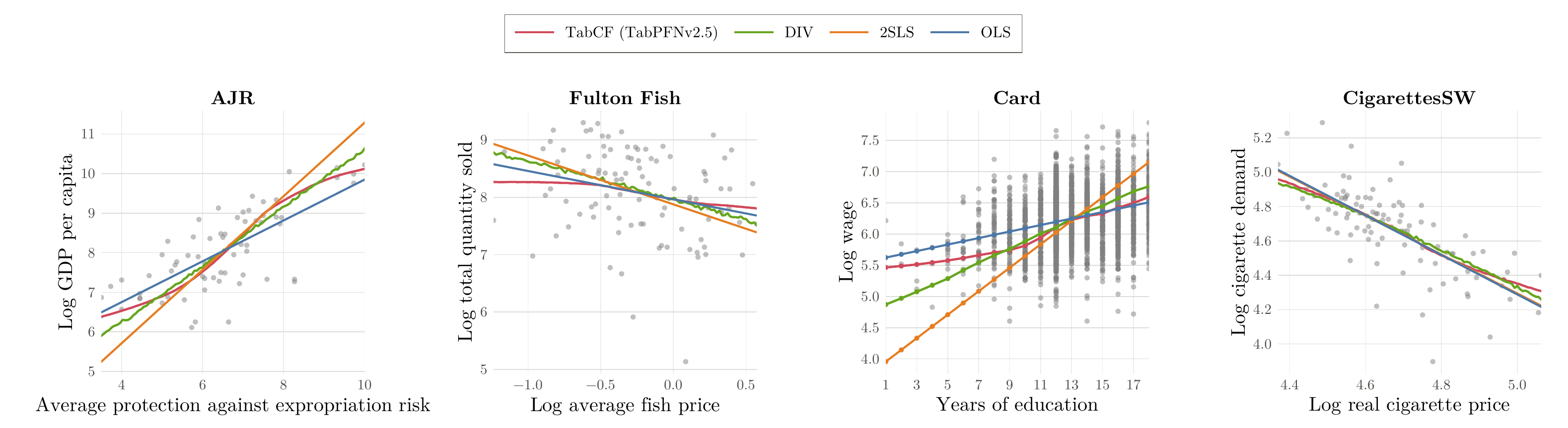}
\caption{Results of interventional mean estimations on {\bf AJR}, {\bf Fulton Fish}, {\bf Card}, and {\bf CigarettesSW}.}
\label{fig:real-data-official}
\end{figure}

\begin{figure}[!htbp]
\centering
\includegraphics[width=0.8\textwidth]{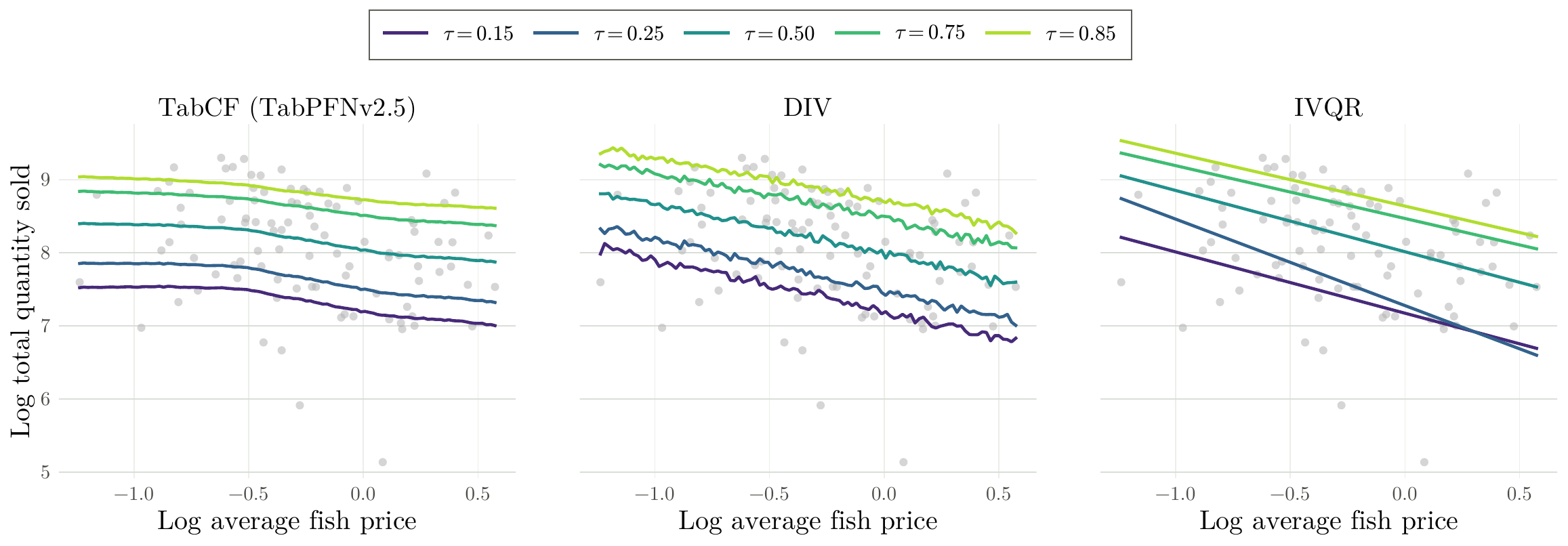}
\caption{Results of interventional quantile estimation on \textbf{Fulton Fish}.}
\label{fig:real-data-fulton-quantile}
\end{figure} 

Figures~\ref{fig:real-data-official}--\ref{fig:real-data-fulton-quantile} summarize the results. In \textbf{AJR} and \textbf{Card}, TabCF produces curves qualitatively consistent with standard economic interpretations. In \textbf{Fulton Fish} and \textbf{CigarettesSW}, TabCF yields plausible downward-sloping demand responses. Overall, these examples illustrate that TabCF produces economically interpretable curves under standard benchmark IV settings.

%Across all panels, the goal is structural interpretability and agreement with established IV benchmarks, not supervised prediction against an observed counterfactual target.

\section{Discussion}
\label{sec:discussion}

In this work, we introduce TabCF, a simple yet effective control function method based on tabular foundation models. TabCF offers accurate, fast, transparent, and tuning-light causal estimation of distributional quantities. 
Moreover, it is intended as a general control function framework rather than as an estimator tied to a particular tabular foundation model. The foundation model is used only as a probabilistic regression module. This modularity is useful because the same causal wrapper can, in principle, benefit from future tabular foundation models with better calibration, longer contexts, or faster inference, without changing the underlying causal identification argument.

\paragraph{Limitations.}
Despite the promising performance, TabCF has several limitations. It inherits the weaknesses of the current tabular foundation models. For example, it heavily relies on GPU acceleration, while CPU inference may be
substantially slower. Moreover, in contrast to conventional methods, TabCF performance does not necessarily improve with an increasing sample size (as shown in Appendix Figure \ref{fig:app-uniform-z-mean}), due to the context window limit of foundation models \citep{nagler2023pfn,ma2025tabdpt}. Subsampling strategies may be useful for addressing this issue \citep{thomas2024localpfn}. Finally, the IV and CF methods rely on inherently untestable assumptions. In this work, we did not consider the possibility of invalid instruments \citep{chen2024discovery}. This can be partially addressed by using median estimators \citep{hartford2021valid}. We leave this for future research.

\section*{Acknowledgment}

This research is supported by U.S. NSF grant DMS-2515789.

\bibliographystyle{plainnat}
\bibliography{tmlr}
%reuse the bibliography from TMLR version.

\clearpage

\appendix

\section{Further considerations on TabCF design}

\subsection{Comparison of explicit and implicit control function approaches}
\label{app:explicit-implicit-cf}

Here, we further discuss the explicit and implicit control function (CF) approaches in
Section~\ref{sec:identification}. We record one of our early (failed) attempts to integrate tabular foundation models (TFMs) with an ``implicit'' identification approach \citep{lee2007endogeneity}. 
The following experiment compares estimators using an explicit (TabCF) or an implicit identification \citep{lee2007endogeneity}. We evaluate them with the interventional quantile setting in
Section~\ref{sec:sim-results-quantile}. TabPFNv2.5 is used for both estimators in this comparison.

\begin{figure}[!htbp]
\centering
\includegraphics[width=.8\textwidth]{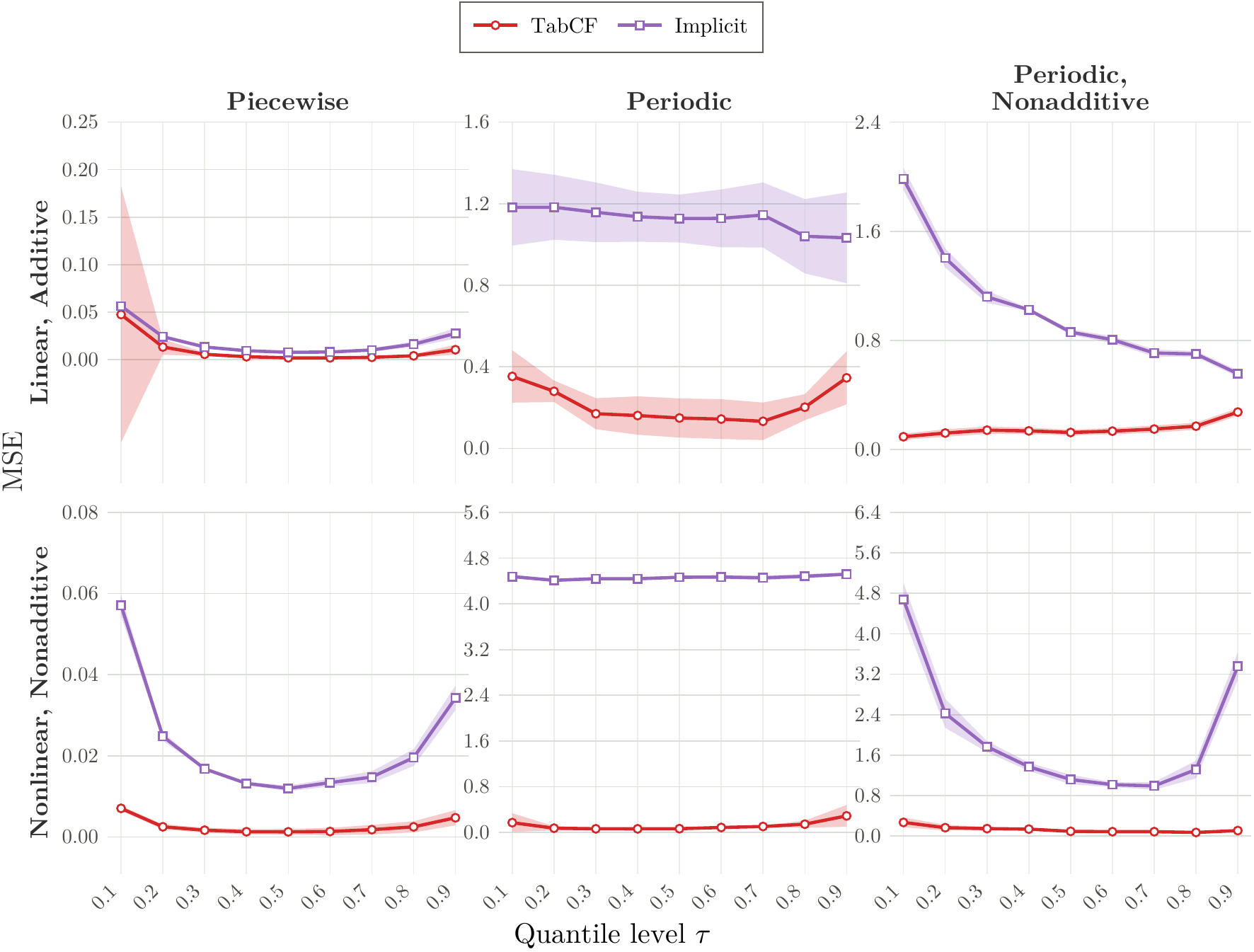}
\caption{Comparison between TabCF and an implicit control function approach \citep{lee2007endogeneity} on interventional quantile estimation. Both methods use TabPFNv2.5 for nuisance estimation. Results are averaged over 30 random seeds with standard deviations in shaded regions.}
\label{fig:explicit-implicit-cf-diagnostic}
\end{figure}

Figure~\ref{fig:explicit-implicit-cf-diagnostic} shows that integrating the implicit CF approach with TFM has larger estimation errors. This is likely due to error amplification when solving equations, performing optimization, or solving inverse problems.

\subsection{Diagnostics on control function}
\label{app:diagnostics}

In TabCF, the most important intermediate quantity is $\widehat{V}$. We recommend the following empirical checking to assess its quality.

\paragraph{Goodness-of-fit checking.} A good fitting should yield 
\begin{equation*}
    \widehat V\mid (W=w)\sim \mathrm{Unif}(0,1) \quad 
    \text{and} \quad
    \widehat{V}\independent Z\mid (W = w),
\end{equation*}
approximately, for any relevant $w$. When $W$ is absent, one can use Q-Q plot to assess $\widehat V \sim \mathrm{Unif}(0,1)$ and nonparametric testing (e.g., distance correlation test \citep{szekely2007measuring}) to assess $\widehat{V}\independent Z$. When $W$ is of low dimensions, one can stratify $W$ and check each stratum. If $W$ is of high dimensions, one should resort to other approaches, which is beyond the scope of this work.
It is worth noting that passing this goodness-of-fit checking does not imply the IV and CF conditions are satisfied.

\paragraph{Conditional independence checking.}
If the IV and CF conditions are well satisfied, then one should have 
\begin{equation*}
    Y\independent Z\mid (X = x, \widehat{V} = v, W = w)
\end{equation*}
approximately, for any relevant $(x, v, w)$. This can be examined by nonparametric testing or independence scores (e.g., FOCI \citep{azadkia2021simple}). Failure of this diagnostic indicates a violation of some parts of the IV/CF specification, but passing it does not prove the conditions are met.

\subsection{Comparison of full-sample and cross-fitted control function}
\label{app:cross-fitting}

As described in Section \ref{sec:tabpfn}, TabCF uses the full sample for CF construction. An alternative is to use sample splitting and cross-fitting \citep{chernozhukov2018double} to reduce the potential impact of overfitting nuisances. Here, we compare the full-sample and cross-fitting approaches for TabCF.

The TabCF estimator fits the first-stage conditional CDF on the full training sample and evaluates $\widehat V_i=\widehat F_{X\mid Z}(X_i\mid Z_i)$ on the same observations, whereas cross-fitting uses five folds: for each fold, the first-stage model is fitted on the other four folds, and $\widehat V_i$ is predicted out of fold.
The same second-stage TabCF estimator is then fitted using these $\widehat V_i$ values.

For interventional means, the MSE is computed on the same intervention grids as in Section~\ref{sec:sim-results-mean},
\[
    \frac{1}{200}\sum_{g=1}^{200}\{\widehat\mu(x_g)-\mu(x_g)\}^2,
\]
with the true mean approximated by a Monte Carlo simulation.
For interventional quantiles, let $\mathcal T=\{0.1,0.2,\ldots,0.9\}$.
We first compute the grid MSE at each quantile level,
\[
    \mathrm{MSE}(\tau)=
    \frac{1}{200}\sum_{g=1}^{200}\{\widehat q_\tau(x_g)-q_\tau(x_g)\}^2,
    \qquad \tau\in\mathcal T,
\]
and then report the average $\frac{1}{9}\sum_{\tau\in\mathcal T}\mathrm{MSE}(\tau)$.
We set the sample size $n=1000$.

\begin{table}[htbp]
    \centering
    \caption{Estimation comparison of full-sample and cross-fitted control function. Results are averaged over 30 random seeds with standard deviations in parentheses. The quantile MSE column averages the nine $\tau$-specific MSEs for $\tau=0.1,\ldots,0.9$.}
    \small
    \label{tab:fullsample_crossfit}
    \begin{tabular}{llcc}
        \toprule
        \textbf{Setting} & \textbf{Approach} & \textbf{MSE for Mean} & \textbf{MSE for Quantile}\\
        \midrule
        (T1,O1) & Full-sample $\hat{V}$ & 0.00 (0.00) & 0.00 (0.00) \\
        (T1,O1) & 5-fold cross-fitted $\hat{V}$ & 0.00 (0.00) & 0.00 (0.00) \\
        \addlinespace
        (T1,O2) & Full-sample $\hat{V}$ & 0.15 (0.14) & 0.17 (0.08) \\
        (T1,O2) & 5-fold cross-fitted $\hat{V}$ & 0.15 (0.14) & 0.17 (0.08) \\
        \addlinespace
        (T1,O3) & Full-sample $\hat{V}$ & 0.08 (0.08) & 0.09 (0.05) \\
        (T1,O3) & 5-fold cross-fitted $\hat{V}$ & 0.08 (0.08) & 0.09 (0.05) \\
        \midrule
        (T2,O1) & Full-sample $\hat{V}$ & 0.00 (0.00) & 0.00 (0.00) \\
        (T2,O1) & 5-fold cross-fitted $\hat{V}$ & 0.00 (0.00) & 0.00 (0.00) \\
        \addlinespace
        (T2,O2) & Full-sample $\hat{V}$ & 0.06 (0.06) & 0.07 (0.02) \\
        (T2,O2) & 5-fold cross-fitted $\hat{V}$ & 0.06 (0.06) & 0.07 (0.02) \\
        \addlinespace
        (T2,O3) & Full-sample $\hat{V}$ & 0.10 (0.07) & 0.13 (0.05) \\
        (T2,O3) & 5-fold cross-fitted $\hat{V}$ & 0.10 (0.07) & 0.13 (0.05) \\
        \bottomrule
    \end{tabular}
\end{table}

\begin{table}[htbp]
    \centering
    \caption{Runtime comparison of full-sample and cross-fitted control function. Results are averaged over 30 random seeds with standard deviations in parentheses.}
    \small
    \label{tab:fullsample_crossfit_runtime}
    \begin{tabular}{llccc}
        \toprule
        \textbf{Setting} & \textbf{Approach} & \textbf{Stage U1} & \textbf{Stage U2/U3} & \textbf{Total Runtime} \\
        \midrule
        (Mean, TabPFN) & Full-sample $\hat{V}$ & 0.71 (0.00) & 0.52 (0.00) & 1.23 (0.00) \\
        (Mean, TabPFN) & 5-fold cross-fitted $\hat{V}$ & 3.28 (0.06) & 0.50 (0.00) & 3.78 (0.06) \\
        \addlinespace
        (Mean, TabICL) & Full-sample $\hat{V}$ & 0.42 (0.00) & 0.48 (0.02) & 0.89 (0.02) \\
        (Mean, TabICL) & 5-fold cross-fitted $\hat{V}$ & 1.92 (0.06) & 0.48 (0.04) & 2.40 (0.07) \\
        \midrule
        (Quantile, TabPFN) & Full-sample $\hat{V}$ & 0.67 (0.00) & 0.54 (0.00) & 1.21 (0.01) \\
        (Quantile, TabPFN) & 5-fold cross-fitted $\hat{V}$ & 3.17 (0.01) & 0.54 (0.04) & 3.72 (0.04) \\
        \addlinespace
        (Quantile, TabICL) & Full-sample $\hat{V}$ & 0.37 (0.02) & 0.56 (0.03) & 0.93 (0.03) \\
        (Quantile, TabICL) & 5-fold cross-fitted $\hat{V}$ & 1.78 (0.05) & 0.57 (0.00) & 2.35 (0.05) \\
        \bottomrule
    \end{tabular}
\end{table}

Table~\ref{tab:fullsample_crossfit} shows that cross-fitting leaves the final mean and quantile errors essentially unchanged across the six settings.
Table~\ref{tab:fullsample_crossfit_runtime} shows that the extra cost is concentrated in Stage U1, where cross-fitting requires five first-stage fits, while Stage U2/U3 remains nearly unchanged.
We therefore use the full-sample control function for TabCF implementation.

\section{Proof of Proposition \ref{proposition:identification}}
\label{app:proof_cf_identification}

\begin{proof}
The proof proceeds in three steps.

\textbf{Step 1: Conditional independence.}
Fix $w$ in the support of $W$ and $z$ in the conditional support of $Z\mid W=w$.
By (\ref{CF: monotone}), without loss of generality, we assume the map $\eta\mapsto h(z,w,\eta)$ is strictly increasing and continuous,
hence invertible in its third argument; denote the inverse by $h^{-1}(z,w,\cdot)$.
Using $X=h(Z,W,\eta)$ and (\ref{iv: exogeneous}) unconfoundedness, for any $x$ we have
\[
\begin{aligned}
F_{X\mid Z,W}(x\mid z,w)
&=
\mathbb{P}\{h(z,w,\eta)\le x\mid Z=z,W=w\}  \\
&=
\mathbb{P}\{\eta\le h^{-1}(z,w,x)\mid Z=z,W=w\}  \\
&=
\mathbb{P}\{\eta\le h^{-1}(z,w,x)\mid W=w\}  \\
&=
F_{\eta\mid W}\!\left(h^{-1}(z,w,x)\mid w\right).
\end{aligned}
\]
Evaluating at $x=X=h(Z,W,\eta)$ gives
\[
V
=
F_{X\mid Z,W}(X\mid Z,W)
=
F_{\eta\mid W}(\eta\mid W)
\qquad \text{a.s.}
\]
Because $F_{\eta\mid W}(\cdot\mid w)$ is continuous and strictly increasing,
the probability integral transform implies
\[
V\mid W=w\sim \mathrm{Unif}(0,1).
\]
Moreover, since $V$ is a measurable function of $(\eta,W)$ and
$Z\independent \eta\mid W$, we have
\[
V\independent Z\mid W.
\]

\textbf{Step 2: Control function exogeneity.}
Since $V=F_{\eta\mid W}(\eta\mid W)$ and $F_{\eta\mid W}(\cdot\mid W)$ is strictly
increasing, conditioning on $(V,W)$ is equivalent to conditioning on $(\eta,W)$.
Under (\ref{iv: exogeneous}), $Z\independent (\eta,\varepsilon)\mid W$, and hence
$Z\independent \varepsilon\mid \eta,W$.
Because $X=h(Z,W,\eta)$ is measurable with respect to $(Z,W,\eta)$, it follows that
\[
\varepsilon\independent X\mid \eta,W,
\qquad\text{and therefore}\qquad
\varepsilon\independent X\mid V,W.
\]
Consequently, for all $(x,v,w)$ in the relevant support,
\[
\begin{aligned}
F_{Y\mid X,V,W}(y\mid x,v,w)
&=
\mathbb{P}(Y\le y\mid X=x,V=v,W=w)  \\
&=
\mathbb{P}\{g(X,W,\varepsilon)\le y\mid X=x,V=v,W=w\}  \\
&=
\mathbb{P}\{g(x,w,\varepsilon)\le y\mid X=x,V=v,W=w\}  \\
&=
\mathbb{P}\{g(x,w,\varepsilon)\le y\mid V=v,W=w\}.
\end{aligned}
\]

\textbf{Step 3: Interventional CDF.}
Under the intervention $\DO(X=x)$, the structural outcome equation becomes
$Y=g(x,W,\varepsilon)$. For fixed $w$,
\[
\begin{aligned}
F_{Y(x)\mid W=w}(y)
&=
\mathbb{P}\{g(x,w,\varepsilon)\le y\mid W=w\} \\
&=
\int
\mathbb{P}\{g(x,w,\varepsilon)\le y\mid V=v,W=w\}
\,dF_{V\mid W=w}(v) \\
&=
\int_0^1
F_{Y\mid X,V,W}(y\mid x,v,w)\,dv,
\end{aligned}
\]
where the last equality uses $V\mid W=w\sim \mathrm{Unif}(0,1)$.
Finally, since $W$ is pretreatment,
\[
\begin{aligned}
F_{Y(x)}(y)
&=
\mathbb{E}_{W}\!\left[F_{Y(x)\mid W}(y\mid W)\right] \\
&=
\mathbb{E}_{W}\!\left[
\int_0^1
F_{Y\mid X,V,W}(y\mid x,v,W)\,dv
\right].
\end{aligned}
\]
Assumption (\ref{CF: common support}) ensures that
$F_{Y\mid X,V,W}(y\mid x,v,w)$ is well-defined for all relevant $v$ and for
$F_W$-almost every $w$. This proves the identification result.
\end{proof}

\section{Additional synthetic experiments}
\label{app:experiments}

\subsection{Experiments with pretreatment covariates}
\label{app:pretreatment-covariates}

This section examines the settings in which observed pretreatment covariates are
available. We repeat the interventional quantile estimation experiments in
Sections~\ref{sec:sim-results-quantile} after a
covariate augmentation of the same six treatment-outcome settings. Let $h_0$ and $g_0$ denote the treatment and outcome structural functions in Section \ref{sec:sim-results-mean}, respectively. We draw
an observed pretreatment covariate $W\sim\mathcal N(0,1)$ independently, and set
\[
    X=h_0(Z,H,\varepsilon_X)+0.5W,
    \qquad
    Y=g_0(X,H,\varepsilon_Y)+0.5W.
\]

% Methods that support covariates condition on $W$ in the relevant stages, and the reported
% interventional quantiles are marginalized over the empirical distribution of $W$.

\begin{figure}[!htbp]
    \centering
    \includegraphics[width=0.8\textwidth]{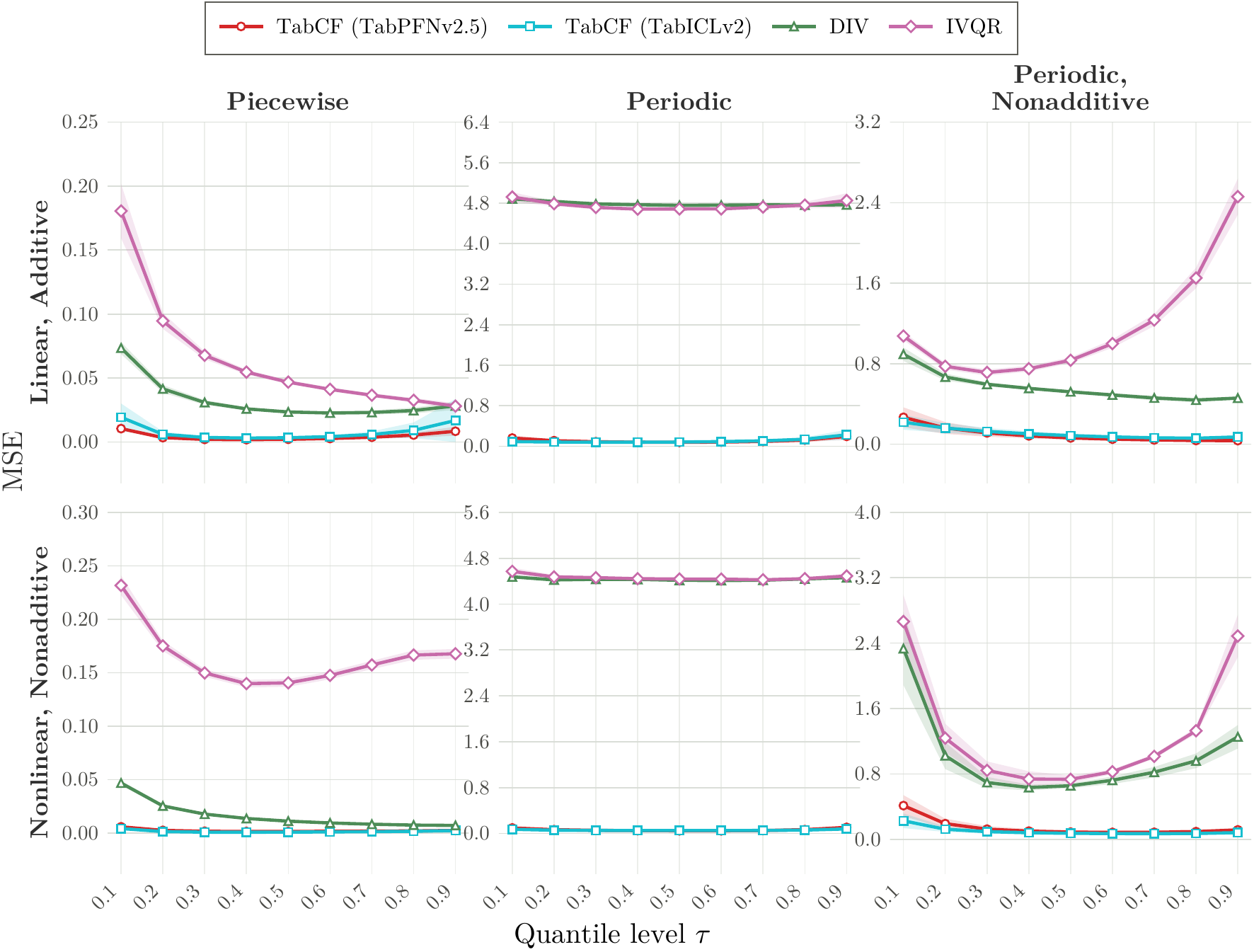}
    \caption{Interventional quantile MSE with one observed pretreatment covariate at $n=4000$. Rows correspond to treatment settings, and columns correspond to outcome settings. Results are averaged over 100 random seeds with standard deviations in shaded regions.}
    \label{fig:app-pretreatment-w-quantile}
\end{figure}

\paragraph{Results.}
Figure~\ref{fig:app-pretreatment-w-quantile} shows that TabCF remains accurate after covariate adjustment. This is also a practical advantage of the TabCF interface, because many existing methods \cite{kookpfister2025dive,saengkyongam2022hsicx,bennett2019deepgmm} do not natively support pretreatment covariates.

\subsection{Experiments on violation of common support condition}
\label{app:uniform-z-common-support}

TabCF requires the common support condition (\ref{CF: common support}), which may not be satisfied in practice. Thus, it is crucial to assess its robustness to violations of (\ref{CF: common support}). To this end, we consider synthetic experiments with a bounded instrument,
$Z\sim\mathrm{Unif}[0,3]$. Apart from replacing the instrument distribution, we use the same settings as the experiments in Section \ref{sec:simulations}. Recall that the treatment models are given as: 

\begin{enumerate}[label=({T\arabic*}), itemsep=3pt, topsep=2pt, parsep=0pt, partopsep=0pt]
\item \textit{Additive linear mean:} $X = Z + H + \varepsilon_X$; 
\item \textit{Quadratic mean with linear scale:} $X = \left(2Z + \tfrac{1}{4}Z^2\right) + (1 + 0.15Z)\,(H+\varepsilon_X)$.
\end{enumerate}

\paragraph{Common support condition (\ref{CF: common support}) is violated.}
Let $
\eta = H+\varepsilon_X \sim \mathcal N(0,2)$.
Because $Z$ is bounded, conditioning on an observed treatment value restricts the possible values of the first-stage disturbance. In the linear case,
\[
    \operatorname{supp}(\eta\mid X=x)=[x-3,x]
    \subsetneq \mathbb R
    =
    \operatorname{supp}(\eta).
\]
In the nonlinear case,
\[
    \operatorname{supp}(\eta\mid X=x)
    =
    \left\{
        \frac{x-2z-z^2/4}{1+0.15z}: z\in[0,3]
    \right\}
    \subsetneq \mathbb R.
\]
Thus, unlike the normal $Z$ design, these bounded-IV setups violate the common support condition in Section~\ref{sec:identification}. The evaluation is nevertheless
restricted to the central treatment region, so the violation is moderate.

\begin{figure}[ht]
    \centering
    \includegraphics[width=0.85\linewidth]{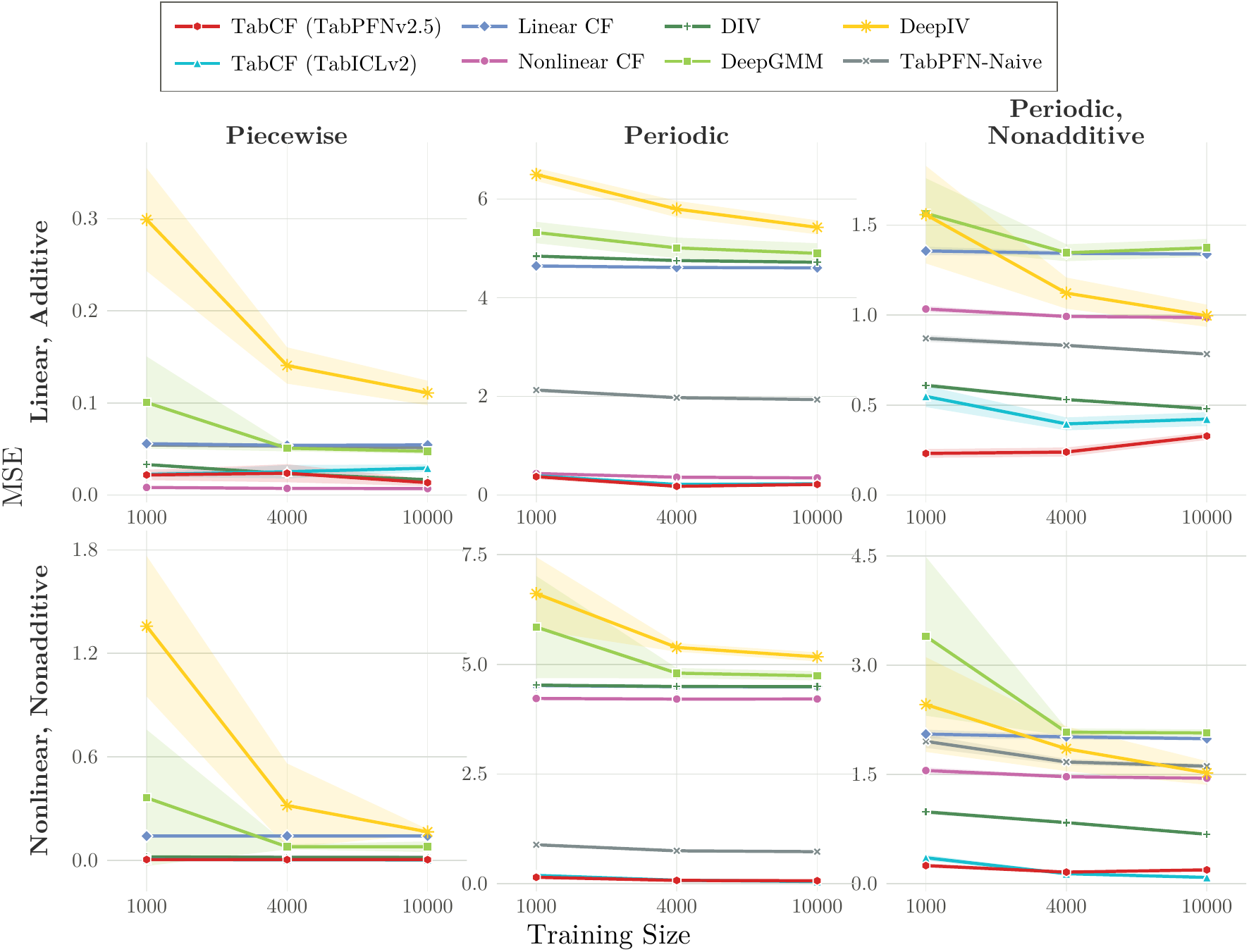}
    \caption{Interventional mean estimation under $Z\sim\mathrm{Unif}[0,3]$. Rows correspond to treatment settings, and columns correspond to outcome settings. Results are averaged over 100 random seeds with standard deviations in shaded regions.}
    \label{fig:app-uniform-z-mean}
\end{figure}

\begin{figure}[h]
    \centering
    \includegraphics[width=0.8\textwidth]{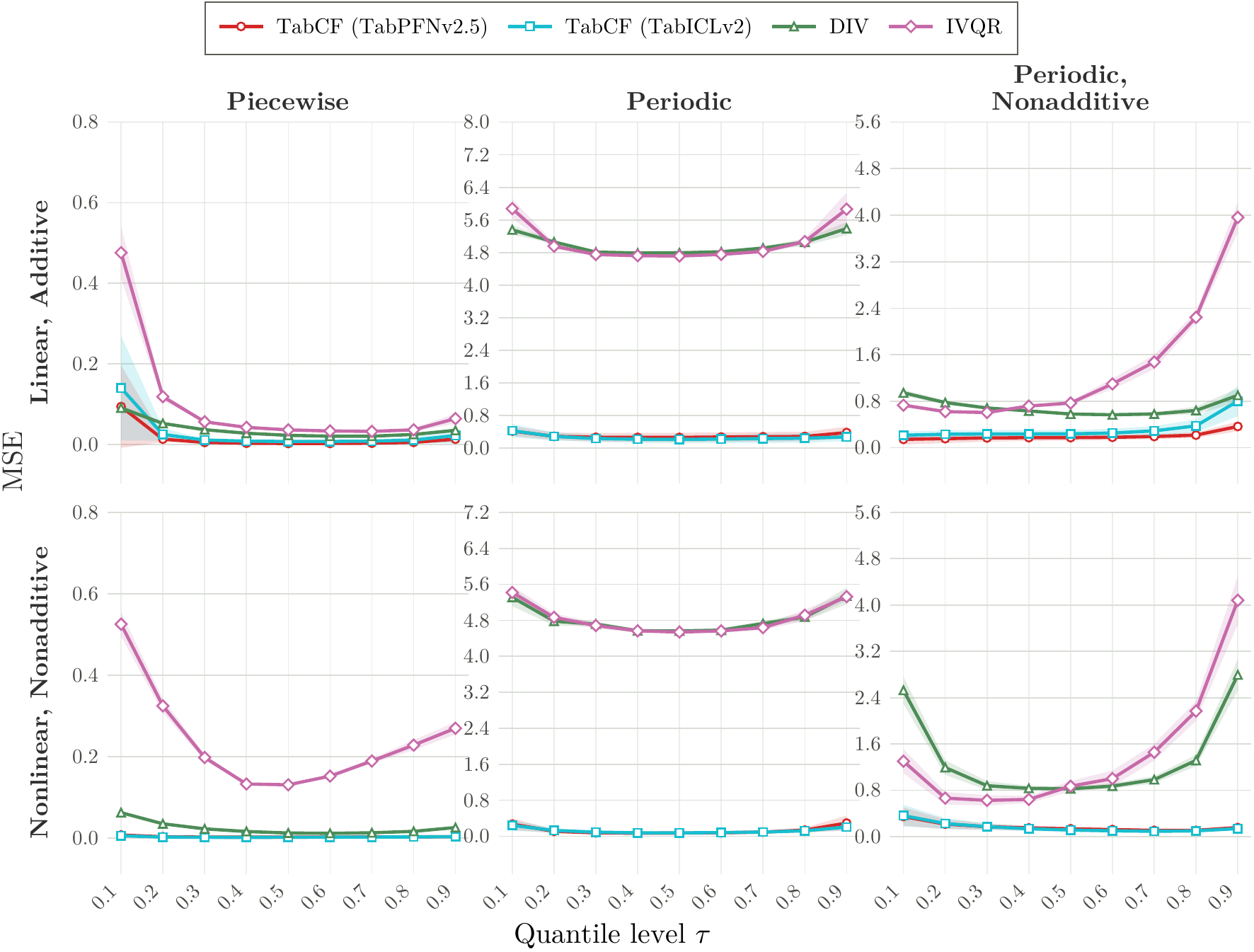}
    \caption{Interventional quantile estimation under $Z\sim\mathrm{Unif}[0,3]$ at $n=4000$. Rows correspond to treatment settings and columns correspond to outcome settings. Results are averaged over 100 random seeds with standard deviations in shaded regions.}
    \label{fig:app-uniform-z-quantile}
\end{figure}

\begin{figure}[!htbp]
    \centering
    \includegraphics[width=\textwidth]{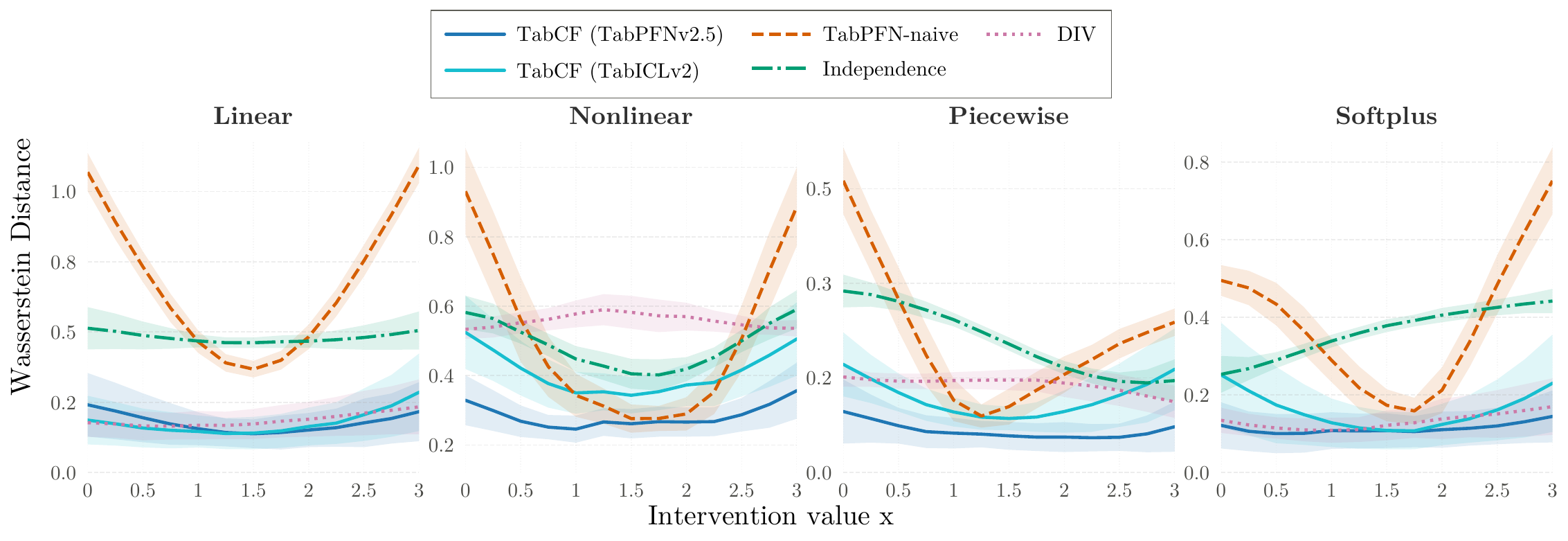}
    \caption{Bivariate-outcome sliced Wasserstein distance under $Z\sim\mathrm{Unif}[0,3]$ when $n=2000$ and $\rho_{\varepsilon}=0.6$. Results are averaged over 100 random seeds with standard deviations in shaded regions.}
    \label{fig:app-uniform-z-multivar}
\end{figure}

\paragraph{Results.}
Figures~\ref{fig:app-uniform-z-mean}--\ref{fig:app-uniform-z-multivar}
are the uniform-$Z$ counterparts of the experiments in
Sections~\ref{sec:sim-results-mean},~\ref{sec:sim-results-quantile}, and~\ref{sec:sim-results-multivariate},
respectively. The results show that {TabCF} remains competitive under this moderate violation of the common support condition,
supporting the empirical robustness of the method beyond the well-specified regime used for
the experiments in Section \ref{sec:simulations}.

\subsection{Experiments on instrument relevance}
\label{app:instrument-relevance}

This section examines how TabCF behaves as the strength of instrument $Z$ varies, especially when the IV strength is weak. 

We use the interventional quantile setting with $n=4000$, $\tau=0.5$, $Z\sim\mathcal N(1.5,0.75^2)$, and modify the six treatment-outcome settings from Section~\ref{sec:sim-results-quantile}. We consider three weak-IV setups indexed by $\kappa\in\{0.05,0.15,0.25\}$ as described below, where smaller $\kappa$ makes $X$ less dependent on $Z$. 
For treatment model (T1), we modify it by setting
\[
    X = 1.5+\kappa(Z-1.5)+H+\varepsilon_X,
\]
while for treatment model (T2), we shrink both the mean and scale toward their marginal normal-$Z$ expectations,
\[
    X =
    \bar m+\kappa\{m(Z)-\bar m\}
    +\{\bar s+\kappa(s(Z)-\bar s)\}(H+\varepsilon_X),
    \quad
    m(z)=2z+\tfrac14 z^2,\quad s(z)=1+0.15z.
\]
Here $\bar m$ and $\bar s$ denote the corresponding marginal expectations under the normal-$Z$ distribution.

\begin{table}[htbp]
    \centering
    \caption{MSE of interventional median ($\tau=0.5$) estimation. Results are averaged over 100 random seeds with standard deviations in parentheses.}
    \label{tab:weakiv_median_mse}
    \small
    \begin{tabular}{llcccc}
        \toprule
        \textbf{$\kappa$} & \textbf{Setting} & \textbf{TabCF (TabPFNv2.5)} & \textbf{TabCF (TabICLv2)} & \textbf{DIV} & \textbf{IVQR} \\
        \midrule
        0.05 & (T1,O1) & \textbf{0.02 (0.01)} & 0.08 (0.04) & \textbf{0.02 (0.01)} & 0.06 (0.39) \\
        0.05 & (T1,O2) & 5.51 (0.63) & \textbf{3.86 (0.84)} & 7.32 (0.24) & 5.33 (6.41) \\
        0.05 & (T1,O3) & 0.86 (0.16) & 1.89 (0.67) & \textbf{0.66 (0.13)} & 1.48 (28.05) \\
        0.05 & (T2,O1) & 0.01 (0.01) & 0.05 (0.04) & \textbf{0.00 (0.01)} & 0.09 (0.03) \\
        0.05 & (T2,O2) & \textbf{2.53 (0.57)} & 4.08 (0.81) & 7.63 (0.53) & 5.28 (2.52) \\
        0.05 & (T2,O3) & \textbf{0.59 (0.43)} & 1.73 (1.89) & 0.97 (0.10) & 1.71 (1.67) \\
        \midrule
        0.15 & (T1,O1) & 0.03 (0.02) & 0.03 (0.08) & \textbf{0.02 (0.00)} & 0.03 (0.01) \\
        0.15 & (T1,O2) & 2.81 (0.71) & \textbf{2.22 (0.66)} & 6.09 (0.52) & 4.58 (0.39) \\
        0.15 & (T1,O3) & 0.67 (0.07) & 1.19 (0.41) & \textbf{0.62 (0.09)} & 0.92 (0.39) \\
        0.15 & (T2,O1) & \textbf{0.00 (0.01)} & 0.01 (0.03) & \textbf{0.00 (0.00)} & 0.09 (0.01) \\
        0.15 & (T2,O2) & \textbf{0.54 (0.21)} & 1.66 (0.48) & 4.91 (0.27) & 4.67 (0.29) \\
        0.15 & (T2,O3) & \textbf{0.30 (0.19)} & 0.70 (0.82) & 0.94 (0.09) & 1.02 (0.24) \\
        \midrule
        0.25 & (T1,O1) & \textbf{0.02 (0.02)} & \textbf{0.02 (0.02)} & \textbf{0.02 (0.00)} & 0.03 (0.01) \\
        0.25 & (T1,O2) & 1.88 (0.67) & \textbf{1.57 (0.48)} & 4.95 (0.39) & 4.42 (0.16) \\
        0.25 & (T1,O3) & \textbf{0.48 (0.09)} & 0.86 (0.30) & 0.53 (0.07) & 0.96 (0.15) \\
        0.25 & (T2,O1) & \textbf{0.00 (0.00)} & 0.01 (0.01) & \textbf{0.00 (0.00)} & 0.08 (0.01) \\
        0.25 & (T2,O2) & \textbf{0.18 (0.16)} & 0.59 (0.22) & 4.50 (0.09) & 4.55 (0.15) \\
        0.25 & (T2,O3) & \textbf{0.10 (0.22)} & 0.31 (0.27) & 0.91 (0.09) & 0.97 (0.14) \\
        \bottomrule
    \end{tabular}
\end{table}

\paragraph{Results.}
Table~\ref{tab:weakiv_median_mse} shows that weaker relevance mainly affects the harder periodic outcome settings. Errors generally decrease as $\kappa$ increases, most visibly for O2 and O3. Across these weak-IV regimes, TabCF with TabPFNv2.5 is competitive in nearly all settings and are often the best-performing methods in the nonlinear T2 rows, while DIV remains strong in the simplest O1 cases and in some T1/O3 settings. IVQR is comparatively stable in the easiest cases but is less accurate on the harder weak-IV designs. Overall, TabCF degrades gracefully as instrument relevance weakens. This result also suggests TabPFNv2.5 is more robust than TabICLv2 for TabCF in weak relevance regime.

\section{Implementation notes}
\label{app:benchmarks}

\subsection{TabCF}

\paragraph{Interventional means.}

For the mean experiments in Section~\ref{sec:sim-results-mean}, {TabCF} is implemented as the two-stage plug-in procedure from Section~\ref{sec:tabpfn}. The first stage fits a probabilistic tabular backbone to $X$ given $Z$ and forms $\widehat V_i=\widehat F_{X\mid Z}(X_i\mid Z_i)$ using the backbone's predictive CDF. The second stage fits the same type of backbone to $Y$ given $(X,\widehat V)$ and evaluates the structural mean on the intervention grid by averaging over $v\in[0,1]$. In light of the common-support condition in Section~\ref{sec:identification}, the reported MSE is computed only on held-out intervention values whose $X$ values fall in the empirical $0.05$--$0.95$ quantile range of the test-set treatment distribution; the training data used to fit the two stages are not trimmed. The TabPFN and TabICL variants differ only in this predictive backbone; no design-specific hyperparameter search is performed.

\paragraph{Interventional quantiles.}
For Section~\ref{sec:sim-results-quantile}, the implementation estimates the full interventional CDF before computing quantiles. After constructing $\widehat V$ as above, we fit a conditional CDF model for $Y\mid X,\widehat V$, evaluate $\widehat F_{Y\mid X,V}(y\mid x,v)$ on a fixed $(x,y)$ grid, integrate the CDF values over the same $v$ quadrature rule, and then invert the resulting monotone CDF numerically for each requested $\tau$. This keeps the quantile estimator tied to a single estimated distribution rather than fitting a separate quantile regression at each level. Because each intervention value requires repeated CDF evaluations across both outcome and control-function grids, the quantile experiments use 200 treatment values for evaluation. These treatment values are chosen as evenly spaced empirical quantiles over the central $0.05$--$0.95$ range of the held-out treatment distribution, matching the same interior-support convention used in the mean experiments while keeping the distributional computation feasible.

For Section~\ref{sec:sim-results-multivariate}, {TabCF} first applies the scalar distributional estimator componentwise to obtain marginal interventional CDFs for $Y_1$ and $Y_2$. The joint law is then assembled with an $x$-invariant copula fitted from pseudo-uniform scores derived from the estimated marginals. The same marginal-fitting interface is used for the TabPFN, TabICL, and naive marginal variants; the independence baseline keeps the {TabCF} marginals but fixes the copula to $C(u_1,u_2)=u_1u_2$. For the bivariate experiments, we evaluate the joint law at 13 equally spaced intervention values over $[0,3]$, ensuring that the target interventions remain within the central design region used to generate the data. This coarser grid also controls computation, since each intervention value requires marginal distribution evaluation for both outcomes, copula-based sampling or CDF evaluation, and comparison with the oracle joint law.

\subsection{Competing methods}

This appendix summarizes the benchmark methods used in Sections~\ref{sec:sim-results-mean} and~\ref{sec:sim-results-quantile}.
For each method, we indicate, when relevant, whether it is primarily designed for interventional mean estimation,
distribution/quantile estimation, or both. We list only methods that appear in the reported benchmark tables and figures; related-work methods that are discussed elsewhere but not benchmarked here are omitted unless needed for clarification.

% \paragraph{Classical linear IV (2SLS / linear IV).}
% Two-stage least squares (2SLS) and closely related linear-IV estimators fit a first-stage linear projection of $X$ on $Z$
% and then regress $Y$ on the fitted treatment. These methods are standard mean-effect baselines under linearity and are not
% designed to recover the full interventional distribution or quantile curve.

\paragraph{Two-stage residual inclusion.}
Two-stage residual inclusion (2SRI) regression was developed in econometrics for interventional mean estimation
based on augmenting the outcome regression with a control variable that accounts for endogeneity
\citep{terza20082SRI,guo2016control}. Following the DIV benchmark implementation, we include both
a linear CF baseline and a nonlinear CF baseline; the nonlinear version uses natural cubic splines for basis expansion.
The implementations use R's \texttt{lm} routine, together with \texttt{splines::ns} for the nonlinear variant. We use the
benchmark defaults, including \texttt{df = 5} for the spline basis.

\paragraph{DeepIV.}
DeepIV \citep{hartford2017deepiv} estimates the conditional distribution $p(X\mid Z)$ in the first stage and then learns a
second-stage response model by integrating over that estimated treatment distribution. It is primarily designed for
interventional mean estimation rather than full distributional recovery. The method repository is
\url{https://github.com/jhartford/DeepIV}; our benchmark uses the \texttt{econml.iv.nnet.DeepIV} implementation from
\url{https://github.com/py-why/EconML}. We use the fixed default settings in our benchmark wrapper, including the specified
feed-forward Keras architectures, training options, and restart count.

\paragraph{DeepGMM.}
DeepGMM \citep{bennett2019deepgmm} uses neural networks within a generalized method of moments (GMM) objective to learn a structural
function satisfying the moment conditions, targeting mean effects. The method repository is
\url{https://github.com/CausalML/DeepGMM}. We call the released \texttt{ToyModelSelectionMethod} with its built-in model
selection and training defaults.

\paragraph{DIV.}
Distributional Instrumental Variable (DIV) \citep{holovchak2025div} is a generative-model-based IV method that directly targets
the interventional distribution in nonlinear settings. In our comparisons, DIV is the distributional baseline used for both
mean and quantile functionals: means are computed from the estimated interventional distribution, and quantiles are obtained
by inverting the estimated interventional CDF. The software is the R package \texttt{DistributionIV}; its CRAN page is
\url{https://cran.r-project.org/package=DistributionIV}, and a read-only CRAN GitHub mirror is
\url{https://github.com/cran/DistributionIV}. We use the package/default benchmark settings for the neural DIV fit
(e.g., default noise dimensions, layer count, epochs, and learning rate in the simulation wrapper).

\paragraph{IVQR.}
Instrumental variable quantile regression (IVQR) \citep{chernozhukov2005ivqr} targets quantile treatment effects using IVs.
It is a standard quantile-focused econometric baseline in our interventional quantile experiments. The R package repository is
\url{https://github.com/yuchang0321/IVQR}. We use the package's default IVQR estimation choices together with the benchmark
wrapper's automatic coefficient grid.

\paragraph{TabPFN-naive.}
{TabPFN-naive} uses the TabPFNv2.5 predictive backbone as {TabCF} but treats the task as a direct regression
of $Y$ on $X$, without any IV adjustment or control-function step. It therefore serves as an endogeneity-ignorant reference
that isolates the value of the IV correction rather than the predictive backbone itself. The TabPFN repository is
\url{https://github.com/PriorLabs/TabPFN}. We use the default pretrained \texttt{TabPFNRegressor} configuration in this
naive regression wrapper.

\section{Details of real data examples}
\label{app:real-data-details}

This appendix provides background details for the four real-data applications summarized in
Section~\ref{sec:real-data}. The purpose of these applications is not to provide a definitive
empirical re-analysis of each original study. Instead, we use them as standardized scalar-IV
benchmarks for assessing whether TabCF produces economically interpretable and appropriately
regularized causal curves on small or moderate-sized tabular data. For comparability across
applications, we retain one excluded instrument, one endogenous treatment, and one outcome in
each dataset. Continuous-treatment applications are evaluated on a grid over the observed
treatment support, while the schooling application is evaluated only at observed schooling levels
because education is ordered and discrete.

\paragraph{AJR.}
The colonial-origins application follows the Acemoglu-Johnson-Robinson design, in which
historical settler mortality is used as an instrument for institutional quality~\citep{acemoglu2001colonial}. There are $n=64$ countries.
The economic mechanism is that European colonization strategies differed across disease
environments: where settler mortality was high, colonial powers were less likely to establish
inclusive institutions and secure property-rights protections. The treatment is therefore a measure
of institutional quality, and the outcome is long-run income measured by log GDP per capita. Prior linear-IV
analyses and the recent DIV application both suggest a positive, approximately linear relationship
between institutional quality and income~\citep{holovchak2025div}. Thus, a desirable flexible
method should recover the positive direction of the effect without introducing unsupported
nonlinearities.

\paragraph{Fulton Fish.}
The Fulton Fish Market application is a canonical example in which price is endogenous
because market-clearing prices respond to both supply and demand shocks \cite{graddy1995fish}. It consists of $n=111$ days of transaction-level data. Weather conditions
provide a natural source of exogenous variation in supply: rougher sea conditions affect the
quantity of fish brought to market and therefore prices, while plausibly having no direct effect on
consumer demand conditional on price. We use this supply-side variation to estimate how the log
quantity sold changes with log average price. This benchmark is useful because economic theory
gives a clear qualitative reference: demand should decrease as price rises, while the small sample
leaves room for flexible methods to overfit local variation. On the same benchmark, the
interventional quantile comparison among TabCF (TabPFNv2.5), DIV, and IVQR yields
uniformly downward-sloping structural quantile curves. TabCF remains the smoothest across
quantile levels, while DIV and IVQR produce steeper and more quantile-sensitive demand
responses.

\paragraph{Card.}
The Card application ($n=1484$) uses geographic proximity to a four-year college as an instrument for
educational attainment~\citep{card1995geographic}. The instrument shifts the cost of obtaining
additional schooling, and the outcome is the log wage in adulthood. Unlike the continuous-treatment
benchmarks above, this application has a binary instrument and an ordered discrete treatment.
Moreover, the original Card-style specifications typically include a richer set of pre-treatment
covariates than the scalar-IV benchmark used here. We therefore interpret this application as a
stress test for regularized curve estimation under a familiar empirical design, rather than as a
fully specified replication of the original returns-to-schooling analysis. The fitted curve should be
read primarily as a comparison of how the methods regularize the wage--schooling relationship
over the observed education support.

\paragraph{CigarettesSW.}
The cigarette-demand application uses state-level ($n=48\times 2$) variation in cigarette taxes to instrument for real
cigarette prices \cite{stockwatson2007introduction}. The outcome is log packs per capita, and the treatment is log real price. Since
cigarette prices may reflect both policy variation and unobserved demand conditions, tax variation
provides the excluded source of price variation used for IV estimation. This application serves as
a second demand benchmark: the economically relevant qualitative question is whether the fitted
interventional mean curve is consistent with lower consumption at higher prices while remaining
stable in a small state-level panel.

\end{document}